\documentclass{article} 
\usepackage[preprint]{colm2026_conference}

\usepackage{microtype}
\usepackage{hyperref}
\usepackage{url}
\usepackage{booktabs}
\usepackage{booktabs} 
\usepackage[most]{tcolorbox}
\usepackage{multirow} 
\usepackage[table]{xcolor} 
\usepackage{caption}
\usepackage{colortbl}
\usepackage{xcolor}
\usepackage{listings}
\usepackage{xcolor}
\usepackage{float}

\definecolor{codegreen}{rgb}{0,0.6,0}
\definecolor{codegray}{rgb}{0.5,0.5,0.5}
\definecolor{codepurple}{rgb}{0.58,0,0.82}
\definecolor{backcolour}{rgb}{0.95,0.95,0.92}
\usepackage{colortbl}
\usepackage{xcolor}

\definecolor{high_align}{HTML}{D1E8FF} 
\definecolor{low_align}{HTML}{FADBD8}  
\definecolor{misaligned}{HTML}{F1948A} 
\definecolor{high_perf}{HTML}{D5F5E3}  
\definecolor{categorygray}{gray}{0.92}
\newtcolorbox{dtsexample}[1]{
    colback=white,
    colframe=gray!75!black,
    fonttitle=\bfseries,
    title=#1,
    enhanced,
    attach boxed title to top left={yshift=-2mm, xshift=2mm},
    boxed title style={colback=gray!75!black},
    sharp corners,
    boxrule=0.5pt,
    bottomrule=2pt, 
}

 \usepackage{multirow} 
\usepackage{lineno}
\definecolor{myorange}{RGB}{37, 79, 150}
\usepackage{microtype}
\usepackage{mathtools,amssymb,amsthm}

\usepackage{booktabs}
\usepackage{caption}
\usepackage{float}
\usepackage{algpseudocode}
\usepackage{algorithm}
\usepackage{amsmath,amssymb,amsthm}
\usepackage{algorithm}
\usepackage{algpseudocode}
\usepackage[most]{tcolorbox}
\tcbuselibrary{skins,breakable}
\usepackage{booktabs, xcolor, colortbl, graphicx}
\usepackage{xcolor}
\usepackage{listings}
\usepackage{booktabs}
\usepackage[table]{xcolor}
\usepackage{siunitx}
\usepackage{booktabs}
\usepackage[table]{xcolor}
\usepackage{tabularx}
\usepackage{siunitx} 
\usepackage{booktabs}
\usepackage{pifont}
\usepackage{xcolor}
\usepackage{wrapfig}
\usepackage{enumitem}
\newcommand{\llmicon}[1]{\raisebox{-0.15\height}{\includegraphics[height=8pt]{#1}}}
\newcommand{\cmark}{\textcolor{green!60!black}{\ding{51}}}%
\newcommand{\xmark}{\textcolor{red!70!black}{\ding{55}}}%

\usepackage{microtype}     
\usepackage{siunitx}       

\usepackage{amsthm}
\usepackage{amsmath} 
\usepackage{algorithm}
\usepackage{algpseudocode} 

\newtheorem{theorem}{Theorem}[section]

\definecolor{codebg}{gray}{0.95}
\definecolor{keyword}{RGB}{0,0,180}
\definecolor{comment}{RGB}{0,100,0}
\definecolor{string}{RGB}{160,0,0}

\lstdefinestyle{pythonstyle}{
  backgroundcolor=\color{codebg},
  basicstyle=\ttfamily\small,
  commentstyle=\color{comment}\itshape,
  keywordstyle=\color{keyword}\bfseries,
  stringstyle=\color{string},
  numbers=left,
  numberstyle=\tiny,
  numbersep=8pt,
  breaklines=true,
  frame=single,
  rulecolor=\color{orange!50!black},
  frameround=tttt,
  tabsize=4,
  showstringspaces=false
}
\usepackage{graphicx} 

\usepackage[utf8]{inputenc} 
\usepackage[T1]{fontenc}    
\usepackage{hyperref}       
\usepackage{url}            
\usepackage{booktabs}       
\usepackage{amsfonts}       
\usepackage{nicefrac}       
\usepackage{microtype}      
\usepackage{xcolor}         

\definecolor{darkblue}{rgb}{0, 0, 0.5}
\hypersetup{colorlinks=true, citecolor=darkblue, linkcolor=darkblue, urlcolor=darkblue}

\usepackage{xcolor, soul}
\sethlcolor{pink}

\usepackage{subcaption}

\title{\raisebox{-0.35\height}{\includegraphics[height=2.2em]{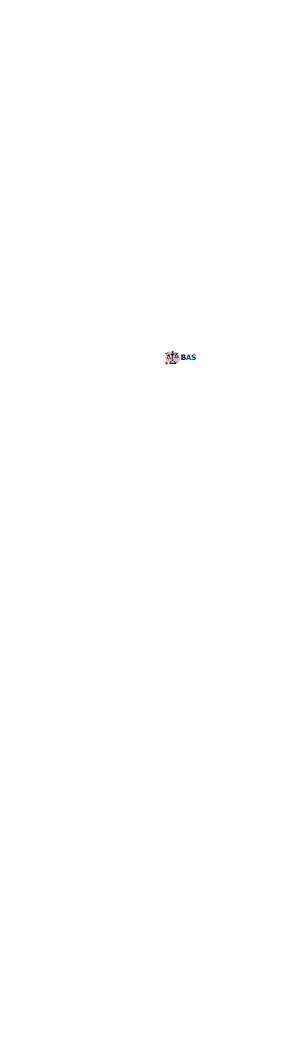}}\,: 
\textbf{A Decision-Theoretic Approach to Evaluating Large Language Model Confidence}}

\author{Sean Wu$^{*1}$, Fredrik K. Gustafsson$^{*1}$, Edward Phillips$^{1}$, Boyan Gao$^{1}$, Anshul Thakur$^{1}$,\\ \textbf{David A. Clifton$^{1,2}$}\\
\\
$^{1}$Department of Engineering Science, University of Oxford\\
$^{2}$Oxford Suzhou Centre for Advanced Research\\
\small$^{*}$Equal Contribution\\
\small\texttt{\{sean.wu, fredrik.gustafsson, anshul.thakur, david.clifton\}@eng.ox.ac.uk}
}

\begin{document}

\ifcolmsubmission
\linenumbers
\fi

\maketitle

\begin{abstract}
Large language models (LLMs) often produce confident but incorrect answers in settings where abstention would be safer. Standard evaluation protocols, however, require a response and do not account for how confidence should guide decisions under different risk preferences. To address this gap, we introduce the \emph{Behavioral Alignment Score} (BAS), a decision-theoretic metric for evaluating how well LLM confidence supports abstention-aware decision making. BAS is derived from an explicit answer-or-abstain utility model and aggregates realized utility across a continuum of risk thresholds, yielding a measure of decision-level reliability that depends on both the magnitude and ordering of confidence. We show theoretically that truthful confidence estimates uniquely maximize expected BAS utility, linking calibration to decision-optimal behavior. BAS is related to proper scoring rules such as log loss, but differs structurally: log loss penalizes underconfidence and overconfidence symmetrically, whereas BAS imposes an asymmetric penalty that strongly prioritizes avoiding overconfident errors. Using BAS alongside widely used metrics such as ECE and AURC, we then construct a benchmark of self-reported confidence reliability across multiple LLMs and tasks. Our results reveal substantial variation in decision-useful confidence, and while larger and more accurate models tend to achieve higher BAS, even frontier models remain prone to severe overconfidence. Importantly, models with similar ECE or AURC can exhibit very different BAS due to highly overconfident errors, highlighting limitations of standard metrics. We further show that simple interventions, such as top-$k$ confidence elicitation and post-hoc calibration, can meaningfully improve confidence reliability. Overall, our work provides both a principled metric and a comprehensive benchmark for evaluating LLM confidence reliability\footnote{Code is available at \url{https://github.com/SeanWu25/Behavioral-Alignment-Score}.}.
\end{abstract}

\section{Introduction}
\label{sec:intro}

One of the grand challenges that large language models (LLMs) face today is hallucination, where models generate responses that sound plausible but are factually incorrect~\citep{Huang_2025, simhi2025trust}. This phenomenon persists even in frontier LLMs such as GPT-5 and DeepSeek \citep{zhang2023complete, guo2025deepseek}, making their deployment in high-stakes domains such as healthcare, science, and law potentially unsafe \citep{jones2025ai, sun2024ai, wu2024benchmarking}. A growing body of research suggests that hallucinations are not simply artifacts of modeling, but are also driven by how LLMs are evaluated and by limitations in post-training alignment \citep{kalai2025language, dang2025survey, yao2025reasoning}. Current evaluation metrics and benchmarks often reward models for producing answers even when they are uncertain, rather than for expressing uncertainty in a way that supports reliable decision making \citep{madhusudhan2024llms}. In many real-world settings, however, the appropriate action depends on the model's uncertainty. For example, a clinician consulting a medical LLM (e.g., Med-PaLM, Meditron, or MedGemma \citep{tu2024towards, chen2023meditron, sellergren2025medgemma}) would benefit from a system whose predictions reflect not only what it believes, but also how confident it is. This mirrors how a risk-aware human makes decisions, acting only when their confidence outweighs the potential cost of error.

\begin{figure}[t]
    \centering
\includegraphics[width=\textwidth]{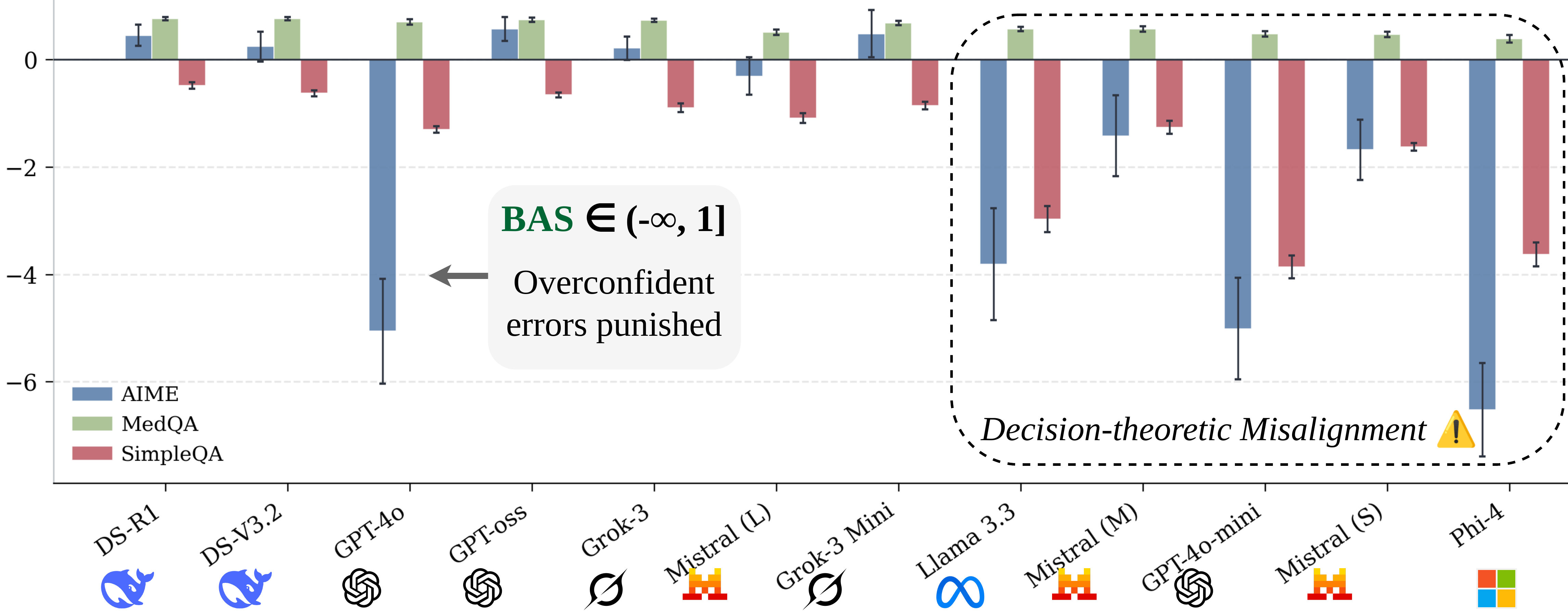}\vspace{-2.0mm}
\caption{We introduce the Behavioral Alignment Score (BAS), a decision-theoretic metric derived from an explicit answer-or-abstain utility model, and use it to evaluate LLM confidence reliability across a diverse set of models and tasks. Reliability varies substantially across tasks, and while larger and more accurate models tend to also achieve higher BAS, even frontier models remain prone to overconfidence on complex, open-ended tasks.}
 \label{fig:overview}
\end{figure}

LLMs do however lack a single reliable measure of uncertainty and may express it through diverse and imperfect signals \citep{beigi2024rethinkinguncertaintycriticalreview, kapoor2024large}. Several proxies exist, including token-level likelihoods \citep{zhang2025token}, entropy over sampled responses \citep{farquhar2024detecting}, semantic dispersion \citep{phillips2025geometric, li2025semanticvolumequantifyingdetecting}, and confidence estimated through prompting or self-reflection \citep{taubenfeld2025confidence}, but these approaches often produce inconsistent results \citep{ma2025estimating}. Many methods also rely on internal activations or repeated sampling, which are infeasible for proprietary models such as GPT-5 or Claude \citep{phillips2025geometric}. Evaluating model correctness is also noisy and domain dependent, especially in open-ended tasks such as clinical decision making or multi-step reasoning \citep{gaber2025evaluating, gumilar2024assessment, hager2024evaluation}.

These limitations motivate a practical question: \emph{can we trust the self-reported confidence of LLMs for decision making under uncertainty?} In particular, we consider the setting where only text-level access is available, and models are prompted to produce both an answer and an associated confidence estimate. In this setting, the usefulness of confidence depends not only on calibration or ranking, but on whether it supports reliable answer-or-abstain decisions under varying risk preferences.

To address this, we introduce the Behavioral Alignment Score (BAS), a decision-theoretic metric for evaluating how well a model's reported confidence supports abstention-aware decision making. BAS assigns realized utility to model predictions under an explicit cost model, where correct answers are rewarded, incorrect answers are penalized asymmetrically based on confidence, and abstention yields zero utility. It aggregates performance across a continuum of risk thresholds, yielding a measure of decision-level reliability that depends on both the magnitude and ordering of reported confidence. We show theoretically that truthful confidence estimates uniquely maximize expected BAS utility, thereby linking calibration to decision-optimal behavior. BAS is most naturally understood as a decision-theoretic proper score for selective prediction. In this sense, it is related to standard proper scoring rules such as log loss and Brier score, which also reward truthful confidence reporting. However, these metrics penalize underconfidence and overconfidence symmetrically, whereas BAS imposes an asymmetric penalty that strongly prioritizes avoiding overconfident errors. More fundamentally, BAS differs in its objective: it is derived from an explicit abstention-aware utility model, rather than from generic probability forecasting.

Beyond the BAS metric itself, we construct a comprehensive benchmark of LLM self-reported confidence (Figure~\ref{fig:overview}). We evaluate a diverse set of open- and closed-source models across multiple domains, including mathematical reasoning, medical question answering, and factual QA, and compare BAS with widely used reliability metrics such as ECE and AURC. Our results reveal substantial variation in confidence reliability, and while larger and more accurate models tend to also achieve better reliability metrics, even frontier models remain prone to severe overconfidence in challenging settings. Importantly, models with similar ECE or AURC can exhibit substantially different BAS due to highly overconfident errors, highlighting limitations of standard metrics. We further show that confidence elicitation strategies and post-hoc calibration can significantly affect confidence reliability.

Our main contributions are the following:
\begin{itemize}
     \item \textbf{A decision-theoretic metric for confidence evaluation.} We introduce BAS, a confidence reliability metric derived from an explicit answer-or-abstain utility model, and show that it defines a proper scoring objective for selective prediction.
    
    \item \textbf{A benchmark of LLM confidence.} We construct a comprehensive benchmark of self-reported LLM confidence across a diverse set of models, tasks and domains, and evaluate BAS alongside widely used metrics such as ECE and AURC.
    
    \item \textbf{Empirical analysis of confidence reliability.} We show that confidence reliability varies substantially across tasks and generally improves both with model scale and predictive performance, but that different metrics can yield different conclusions. In particular, models with similar ECE or AURC can exhibit markedly different decision-level reliability under BAS due to rare but highly overconfident errors.
    
    \item \textbf{Practical insights for improving reliability.} We further show that confidence reliability depends not only on the model, but also on how confidence is elicited and calibrated. In particular, simple strategies such as top-$k$ elicitation and post-hoc calibration can reduce overconfidence and improve decision-level reliability.
\end{itemize}

\section{Method: The Behavioral Alignment Score}
\label{sec:bas}

We propose the \emph{Behavioral Alignment Score} (BAS), a decision-theoretic approach for evaluating how well a language model's reported confidence supports answer-or-abstain decisions under varying risk preferences. The goal of BAS is to measure \emph{decision-level reliability}: whether a model appropriately chooses to answer or abstain based on its confidence. We now present the decision-theoretic formulation underlying BAS, followed by its closed-form definition and its relation to standard proper scoring rules and widely used metrics.

\subsection{Decision-Theoretic Formulation}
We formalize BAS as a selective prediction problem with abstention. Given a prompt $x$, the model produces a response $R$ together with a scalar confidence value $s \in [0, 1)$, intended to reflect the probability that the response is correct. Let $Z \in \{0,1\}$ denote the correctness of the response ($Z=1$ if correct, $0$ otherwise), and let $p = \mathbb{P}(Z=1 \mid x, R)$ denote the true (but unobserved) probability of correctness.

We consider a downstream decision-maker who must choose whether to answer (i.e., output the model response $R$) or abstain based on the reported model confidence $s$. The decision-maker is characterized by a \emph{risk tolerance} threshold $t \in [0,1)$, which determines the trade-off between the benefit of a correct answer and the cost of an incorrect one. We define a selective utility function $S_t$ that assigns unit reward to a correct answer, a risk-dependent penalty $-\frac{t}{1-t}$ to an incorrect answer, and zero utility to abstention:
\begin{equation}
\label{eq:utility}
S_t(Z, a) \;=\;
\begin{cases}
\;\;1, & a=\textsc{Answer},\, Z=1,\\[2pt]
-\frac{t}{1-t}, & a=\textsc{Answer},\, Z=0,\\[2pt]
\;\;0, & a=\textsc{Abstain}.
\end{cases}
\end{equation}
Under this selective utility $S_t$, the decision that maximizes the expected utility, $\mathbb{E}[S_t] = p - (1-p)\frac{t}{1-t}$, is to answer if and only if $p \ge t$. Since the true probability of correctness $p$ is not available at test time, we instead define a model-induced decision policy $\pi_s(t)$ that uses the reported model confidence $s$ as a proxy:
\begin{equation}
\label{eq:policy}
\pi_s(t) \;=\;
\begin{cases}
\textsc{Answer}, & s \ge t,\\
\textsc{Abstain}, & s < t.
\end{cases}
\end{equation}
We evaluate the model by its expected utility under the policy $\pi_s(t)$, taken over both the randomness in correctness $Z$ and the distribution of risk thresholds $t$. Under a uniform distribution over $t \in [0,1)$, \emph{the expected BAS utility} is defined as:
\begin{equation}
\label{eq:expected_bas}
\int_{0}^{1} \mathbb{E}[S_t(Z, \pi_s(t))] \, dt,
\end{equation}
where the inner expectation is taken over $Z$. This formulation raises a key question: when does using $s$ in place of $p$ lead to optimal decisions? The following theorem establishes that, under BAS, truthful confidence estimates are uniquely optimal (the proof is in Appendix~\ref{sec:appendix_bas_theoretical}):

\begin{theorem}[Optimality of BAS Utility]
\label{thm:bas_optimal}
Let $s \in [0, 1)$ be the reported model confidence and $p \in [0, 1]$ the true probability of correctness. The expected BAS utility (Eq.~\ref{eq:expected_bas}) is uniquely maximized when $s = p$ for all $p < 1$. For $p=1$, the expected utility is strictly increasing in $s$ and achieves its supremum as $s \to 1$.
\end{theorem}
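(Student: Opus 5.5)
The plan is to compute the expected BAS utility in closed form as a function of the report $s$ for a fixed true probability $p$, and then maximize it over $s \in [0,1)$ by elementary calculus.

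\textbf{Closed form.} Fix $p$ and $s$. Under the policy $\pi_s(t)$ of Eq.~\ref{eq:policy}, the model answers exactly when $t \le s$ and abstains otherwise, and abstention contributes zero utility. For $t \le s$, Eq.~\ref{eq:utility} gives inner expectation $p - (1-p)\frac{t}{1-t}$. Hence
\begin{equation*}
U_p(s) \;=\; \int_0^s \Big( p - (1-p)\frac{t}{1-t} \Big)\, dt .
\end{equation*}
Since $s<1$, the integrand is continuous and bounded on $[0,s]$, so the integral is finite. Using $\frac{t}{1-t} = \frac{1}{1-t} - 1$ yields the explicit form
\begin{equation*}
U_p(s) \;=\; s + (1-p)\log(1-s).
\end{equation*}
The logarithmic term makes the connection to log loss visible. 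I would record this closed form because it is also the per-example BAS used later in the paper.

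\textbf{Optimization.} By the fundamental theorem of calculus,
\begin{equation*}
U_p'(s) \;=\; p - (1-p)\frac{s}{1-s} \;=\; \frac{p-s}{1-s}.
\end{equation*}
Consider first the case $p<1$.
\begin{itemize}[leftmargin=*]
\item On $[0,p)$ we have $U_p'(s)>0$, and on $(p,1)$ we have $U_p'(s)<0$. So $U_p$ is strictly increasing up to $p$ and strictly decreasing afterward, which makes $s=p$ the unique maximizer on $[0,1)$.
\item As a cross-check, $U_p''(s) = -\frac{1-p}{(1-s)^2} < 0$, so $U_p$ is strictly concave.
\item At the boundary $p=0$ the maximizer is $s=0$. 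The sign argument still applies there, since the derivative is negative on all of $(0,1)$.
\end{itemize}
For $p=1$, the closed form reduces to $U_1(s)=s$. This is strictly increasing, and its supremum over $[0,1)$ is $1$, approached but not attained as $s\to 1$. This matches the second claim of Theorem~\ref{thm:bas_optimal}.

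\textbf{Main obstacle.} There is essentially none beyond care at the boundaries. The points to check are:
\begin{itemize}[leftmargin=*]
\item the integral is finite for every $s<1$, which is why the theorem restricts $s \in [0,1)$;
\item interchanging the expectation over $Z$ with the integral over $t$ is justified, because $Z$ is binary, so the expectation is a finite sum;
\item uniqueness follows from the strict sign change of $U_p'$ at $p$, not merely from $U_p'(p)=0$.
\end{itemize}
I would also note that as $s\to1$ with $p<1$ we get $U_p(s)\to-\infty$. This shows the asymmetric, unbounded penalty on overconfident reports that the paper emphasizes.
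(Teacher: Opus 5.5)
Your proposal is correct and follows essentially the same route as the paper: you integrate over the answering region $[0,s]$ to get the closed form $s + (1-p)\ln(1-s)$, compute the same derivative, and handle $p=1$ via $U_1(s)=s$. The only cosmetic difference is how uniqueness is argued: the paper relies on strict concavity (the negative second derivative), while you rely on the sign change of $\frac{p-s}{1-s}$ at $s=p$, which is the same argument the paper uses for its weighted generalization.
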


This result shows that BAS defines a \emph{proper decision-theoretic scoring objective}: reporting confidence truthfully maximizes the expected utility in Eq.~\ref{eq:expected_bas}, and therefore induces optimal answer-or-abstain behavior. While the standard BAS assumes a uniform prior over risk tolerance for general-purpose evaluation, this formulation naturally generalizes to arbitrary risk profiles via a weighting function $w(t)$ for safety-critical deployments (see Appendix~\ref{sec:weighted_bas_consolidated}).

\subsection{The BAS Metric}
\label{sec:bas_score}
Rather than evaluating performance at a fixed risk threshold, BAS aggregates decision utility across the full spectrum of risk preferences $t \in [0,1)$. We define the primary metric as the expected BAS utility under a uniform distribution over $t$, as defined in Eq.~\ref{eq:expected_bas}.

In practice, we compute the \emph{realized} per-example contribution $U(s, Z)$ and report the dataset mean, which corresponds to this expectation under the empirical distribution. Under the policy $\pi_s$, the model answers if and only if $t \le s$ and abstains otherwise. Since abstention yields zero utility (Eq.~\ref{eq:utility}), the expected BAS utility in Eq.~\ref{eq:expected_bas} reduces to an integral over the answering region, $\int_0^s \mathbb{E}[S_t]\, dt$. Evaluating this integral yields a closed-form expression for the realized utility, which depends on the observed correctness outcome $Z$:
\begin{equation}
\label{eq:bas_closed_form}
U(s, Z) \;=\;
\begin{cases}
s, & Z=1,\\[4pt]
s + \ln(1 - s), & Z=0.
\end{cases}
\end{equation}
Given a dataset of $N$ examples, where each example consists of a model response $R_i$, a confidence value $s_i$, and a corresponding correctness label $Z_i$, the $\mathrm{BAS}$ metric is finally computed according to:
\begin{equation}
\label{eq:BAS_metric}
\mathrm{BAS} = \frac{1}{N}\sum_{i=1}^{N} U(s_i, Z_i).
\end{equation}
Intuitively, correct predictions ($Z=1$) receive utility proportional to the model confidence $s$, while incorrect predictions incur a penalty that grows rapidly as $s \to 1$. In particular, the logarithmic term $\ln(1 - s)$ diverges to $-\infty$ for highly confident errors, ensuring that overconfident hallucinations are penalized severely. Thus, $\mathrm{BAS} \in (-\infty, 1]$, where higher values indicate better decision-level reliability.

\subsection{Relation to Existing Metrics}
\begin{table}[t]
    \centering
	\resizebox{0.80\textwidth}{!}{%
          \begin{tabular}{@{}lcccccc@{}}
            \toprule
            Property 
            & Accuracy 
            & ECE 
            & AURC 
            & Brier Score  
            & Log Loss
            & \textbf{BAS} \\
            \midrule
            Confidence magnitude
              & \xmark & \cmark & \xmark & \cmark & \cmark & \cmark \\
            Calibration
              & \xmark & \cmark & \xmark & \cmark & \cmark & \cmark \\
            Ranking
              & \xmark & \xmark & \cmark & \cmark & \cmark & \cmark \\
            Proper scoring rule
              & \xmark & \xmark & \xmark & \cmark & \cmark & \cmark \\
            Strongly penalizes overconfidence
              & \xmark & \xmark & \xmark & \xmark & \cmark & \cmark \\
            Asymmetric overconfidence penalty\hspace{1.0mm}
              & \xmark & \xmark & \xmark & \xmark & \xmark & \cmark \\
            Decision-theoretic derivation
              & \xmark & \xmark & \xmark & \xmark & \xmark & \cmark \\
            \bottomrule
          \end{tabular}
	}
    \vspace{-1.5mm}
    \caption{Conceptual comparison of BAS with existing evaluation metrics. BAS is most closely related to proper scoring rules such as log loss, sharing a sensitivity to high-confidence errors, but differs in imposing an asymmetric penalty that strongly prioritizes avoiding model overconfidence. In comparison, widely used calibration- and ranking-based metrics such as ECE and AURC capture only partial aspects of confidence reliability.}\vspace{-2.0mm}
    \label{tab:bas-comparison}
\end{table}

We position BAS relative to widely used metrics for confidence reliability evaluation, such as Expected Calibration Error (ECE) and the Area Under the Risk-Coverage Curve (AURC), as well as standard proper scoring rules such as Brier score and log loss. These metrics capture complementary aspects of confidence reliability, but differ in both motivation and interpretation. Table~\ref{tab:bas-comparison} summarizes these relationships.

ECE evaluates probability calibration by measuring whether predicted confidence matches empirical accuracy, but does not account for abstention or the asymmetric consequences of overconfident errors. AURC evaluates ranking quality for selective prediction by measuring the trade-off between risk and coverage, but is insensitive to the absolute magnitude of confidence and therefore does not penalize extreme overconfidence when ranking is preserved. BAS clearly differs from these calibration- and ranking-based metrics. In particular, ECE and AURC fail to capture the impact of rare but highly overconfident errors. In BAS, such errors incur large negative utility due to the logarithmic penalty $\ln(1 - s)$ in Eq.~\ref{eq:bas_closed_form}, directly reflecting their effect on decision-level risk. As shown in Appendix~\ref{sec:appendix_metrics_comparison}, these differences can lead to concrete divergences in practice: models can exhibit identical ECE or AURC while differing substantially in BAS. These examples highlight that calibration or ranking metrics alone may not fully capture decision-level reliability in abstention-aware settings.

Standard proper scoring rules such as the Brier score and log loss provide a closer comparison to BAS. For a binary outcome $Z \in \{0,1\}$ with predicted confidence $s$, the log loss is given by $\ell_{\log}(s, Z) = -\bigl[Z \ln s + (1 - Z) \ln(1 - s)\bigr]$. Like BAS, log loss strongly penalizes overconfident errors via the logarithmic term $\ln(1 - s)$. In contrast, the Brier score applies a bounded quadratic penalty $(s - Z)^2$. Because of the shared sensitivity to highly confident incorrect predictions, BAS is most closely related to log loss among existing metrics. However, an important distinction is that log loss penalizes underconfidence and overconfidence symmetrically, through the terms $\ln s$ and $\ln(1-s)$, respectively. BAS, in contrast, imposes an \emph{asymmetric} penalty: it strongly penalizes overconfident errors while only linearly rewarding correct predictions. As a result, models can have identical log loss yet clearly different BAS (Appendix~\ref{sec:appendix_metrics_comparison}). Moreover, there is a key conceptual distinction: whereas log loss evaluates confidence as a probability forecast, BAS evaluates the expected utility of using confidence to make answer-or-abstain decisions under varying risk thresholds. This gives BAS a direct interpretation in terms of selective prediction behavior in abstention-aware settings.

\section{Experimental Setup}

We construct a comprehensive benchmark to evaluate the reliability of self-reported LLM confidence estimates. We evaluate a diverse set of models across multiple tasks, compare BAS with widely used metrics such as ECE and AURC, and study how confidence elicitation and calibration affect decision-level reliability. Our evaluation focuses on a black-box setting in which models are prompted to produce both an answer and a scalar confidence value.

\textbf{Benchmarks.}
We evaluate BAS across a diverse set of benchmarks designed to probe different failure modes of LLM confidence. Specifically, we consider AIME 2024/25~\citep{petrov2025proof} for complex multi-step mathematical reasoning, where incorrect answers often arise from reasoning errors; MedQA~\citep{singhal2025toward} for high-stakes medical question answering, where incorrect predictions can have significant consequences; and SimpleQA~\citep{wei2024measuring} for short-form factual QA, designed to test whether models abstain when knowledge is missing. These benchmarks differ in structure and difficulty, allowing us to evaluate how decision-useful confidence varies across domains.

\textbf{Models.}
We evaluate a diverse set of twelve models spanning different scales and training paradigms, including a reasoning-focused model (DeepSeek-R1~\citep{guo2025deepseek}), frontier models (DeepSeek-V3.2~\citep{liu2025deepseek}, Grok-3, Grok-3 Mini, GPT-4o~\citep{hurst2024gpt}, GPT-oss-120b~\citep{agarwal2025gpt}), smaller open-weight models (Llama 3.3 70B Instruct~\citep{dubey2024llama}, Phi-4~\citep{abdin2024phi}, GPT-oss~\citep{agarwal2025gpt}), as well as Mistral models in three sizes (Large, Medium, and Small)~\citep{liu2026ministral}. This diversity allows us to study how model scale and architecture affect confidence reliability.

\textbf{Confidence elicitation methods.} We evaluate four different methods to elicit confidence values $s \in [0,1)$ from LLMs. Our default method, \emph{direct elicitation}~\citep{lin2022teaching, tian2023just}, prompts the model for an answer and a confidence estimate simultaneously. We compare this to \emph{self-reflection}~\citep{kadavath2022language, mielke2022reducing, yin2023large}, which separates generation and confidence estimation into two distinct steps; \emph{top-$k$} elicitation~\citep{tian2023just}, where the model produces $k$ candidate answers with associated probabilities in a single response, from which we select the highest-probability answer and its confidence; and \emph{top-$k$ + self-reflection}, which combines both techniques. Full prompts for all evaluated elicitation methods are provided in Appendix~\ref{sec:prompt_appendix}.

\textbf{Post-hoc confidence calibration.}
In practice, LLM confidence estimates are often overconfident. We therefore study post-hoc calibration as a practical mechanism for improving the reliability of self-reported confidence. We apply \textit{isotonic regression}, a non-parametric monotonic mapping learned on a held-out validation set, to transform raw confidence values into calibrated estimates. Because BAS penalizes overconfident errors through the logarithmic term in Eq.~\ref{eq:bas_closed_form}, calibration directly affects decision-level utility by reducing the magnitude of these penalties while preserving the ranking of predictions. This setup allows us to evaluate how effective calibration is as a practical intervention, i.e., whether correcting confidence estimates on a validation set alone is sufficient to recover reliable answer-or-abstain behavior. Additional implementation details and analyses are provided in Appendix~\ref{sec:appendix_calibration_details}.

\section{Results}

We evaluate the reliability of self-reported LLM confidence across benchmarks, models, and elicitation methods, as well as the effect of post-hoc calibration. Our results reveal systematic variation in confidence reliability across tasks, trends in how it scales with model size and accuracy, and provide insights into how reliability may be improved in practice.

\subsection{Main Benchmark Results}
\label{sec:results_main}
Table~\ref{tab:results} reports accuracy, ECE, AURC, and BAS for all twelve models across SimpleQA, MedQA, and AIME (BAS results are also visualized in Figure~\ref{fig:overview}). First of all, we find that confidence reliability is strongly task-dependent. In structured multiple-choice settings such as MedQA, frontier and large models achieve positive BAS and near-zero ECE, indicating that their confidence estimates support reliable decisions. In contrast, in open-ended factual QA (SimpleQA), all evaluated models exhibit negative BAS and inflated ECE (ranging from 49.0\% to 84.2\%), reflecting a consistent tendency to produce overconfident incorrect answers. This suggests that even state-of-the-art LLMs struggle to express uncertainty appropriately when specific factual knowledge is absent.

\begin{table}[t]
\centering
\scriptsize
\setlength{\tabcolsep}{2.0pt}
\renewcommand{\arraystretch}{0.9}
\resizebox{\textwidth}{!}{%
\begin{tabular*}{\textwidth}{@{\extracolsep{\fill}}lcccccccccccc}
\toprule
\multicolumn{1}{c}{} & \multicolumn{4}{c}{\cellcolor{green!15}\textbf{SimpleQA}} & \multicolumn{4}{c}{\cellcolor{blue!15}\textbf{MedQA}} & \multicolumn{4}{c}{\cellcolor{red!15}\textbf{AIME}} \\
Model & \cellcolor{green!15}Acc$\uparrow$ & \cellcolor{green!15}BAS$\uparrow$ & \cellcolor{green!15}ECE$\downarrow$ & \cellcolor{green!15}AURC$\downarrow$ & \cellcolor{blue!15}Acc$\uparrow$ & \cellcolor{blue!15}BAS$\uparrow$ & \cellcolor{blue!15}ECE$\downarrow$ & \cellcolor{blue!15}AURC$\downarrow$ & \cellcolor{red!15}Acc$\uparrow$ & \cellcolor{red!15}BAS$\uparrow$ & \cellcolor{red!15}ECE$\downarrow$ & \cellcolor{red!15}AURC$\downarrow$\\
\midrule
\rowcolor{categorygray}
\multicolumn{13}{c}{\textbf{Frontier / Large Models (> 100B params)}} \\
\llmicon{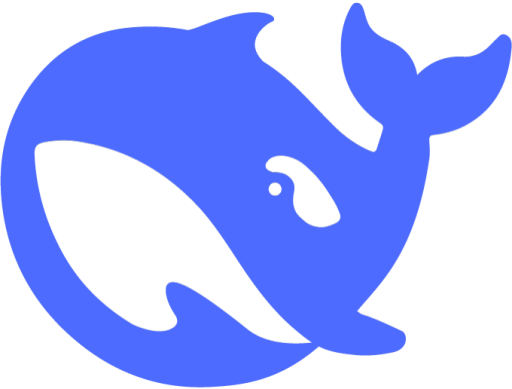} DS-R1 & 29.9$_{2.9}$ &\cellcolor{green!25} -0.48$_{0.06}$ & 52.7$_{2.7}$ & \cellcolor{green!25}0.60$_{0.04}$ & \cellcolor{green!25}92.1$_{1.5}$ & \cellcolor{green!25}0.76$_{0.03}$ & \cellcolor{green!25}2.6$_{1.2}$ & 0.03$_{0.01}$ & 66.7$_{11.7}$ & 0.45$_{0.20}$ & 17.6$_{8.1}$ & 0.07$_{0.06}$ \\
\llmicon{icon/deepseek_logo.png} DS-V3.2 & 17.6$_{2.3}$ & -0.63$_{0.06}$ & \cellcolor{green!25} 49.0$_{2.9}$ & 0.75$_{0.04}$ & 91.8$_{1.5}$ & \cellcolor{green!25}0.76$_{0.03}$ & 3.3$_{1.3}$ & \cellcolor{green!25}0.02$_{0.01}$ & 65.0$_{11.7}$ & 0.24$_{0.28}$ & 24.2$_{10.0}$ & 0.10$_{0.07}$ \\
\llmicon{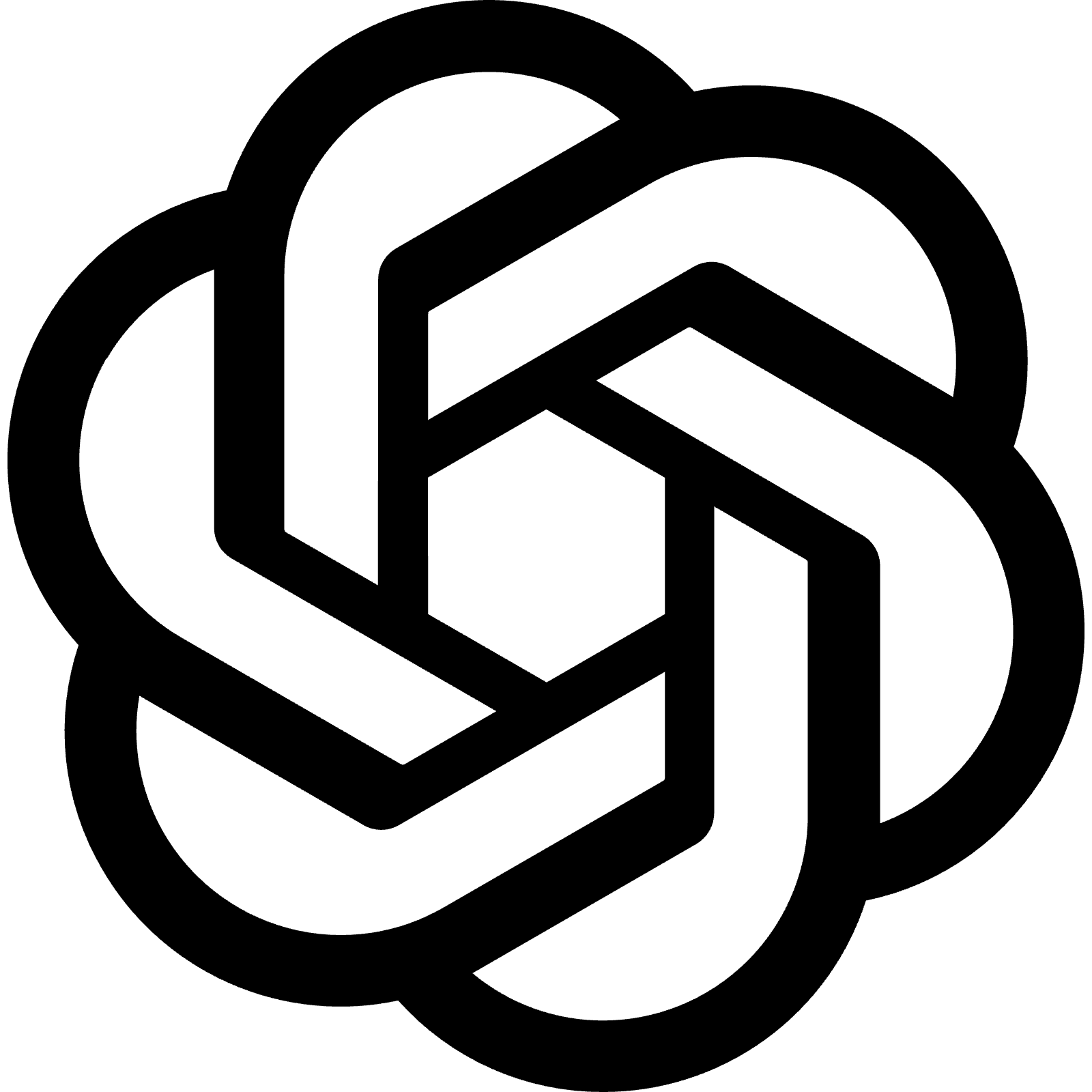} GPT-4o & 21.4$_{2.6}$ & -1.30$_{0.06}$ & 70.5$_{2.5}$ & 0.76$_{0.04}$ & 91.3$_{1.6}$ & 0.70$_{0.05}$ & 3.5$_{1.5}$ & 0.05$_{0.01}$ & 11.7$_{7.5}$ & -5.06$_{0.98}$ & 73.8$_{10.6}$ & 0.77$_{0.12}$ \\
\llmicon{icon/openai_logo.png} GPT-oss & 12.0$_{2.0}$ & -0.66$_{0.05}$ & 58.5$_{2.1}$ & 0.77$_{0.04}$ & 91.0$_{1.6}$ & 0.74$_{0.04}$ &\cellcolor{green!25} 2.6$_{1.2}$ & \cellcolor{green!25}0.02$_{0.01}$ & 75.0$_{10.8}$ &\cellcolor{green!25} 0.57$_{0.22}$ & 7.2$_{6.2}$ &\cellcolor{green!25} 0.05$_{0.04}$ \\
\llmicon{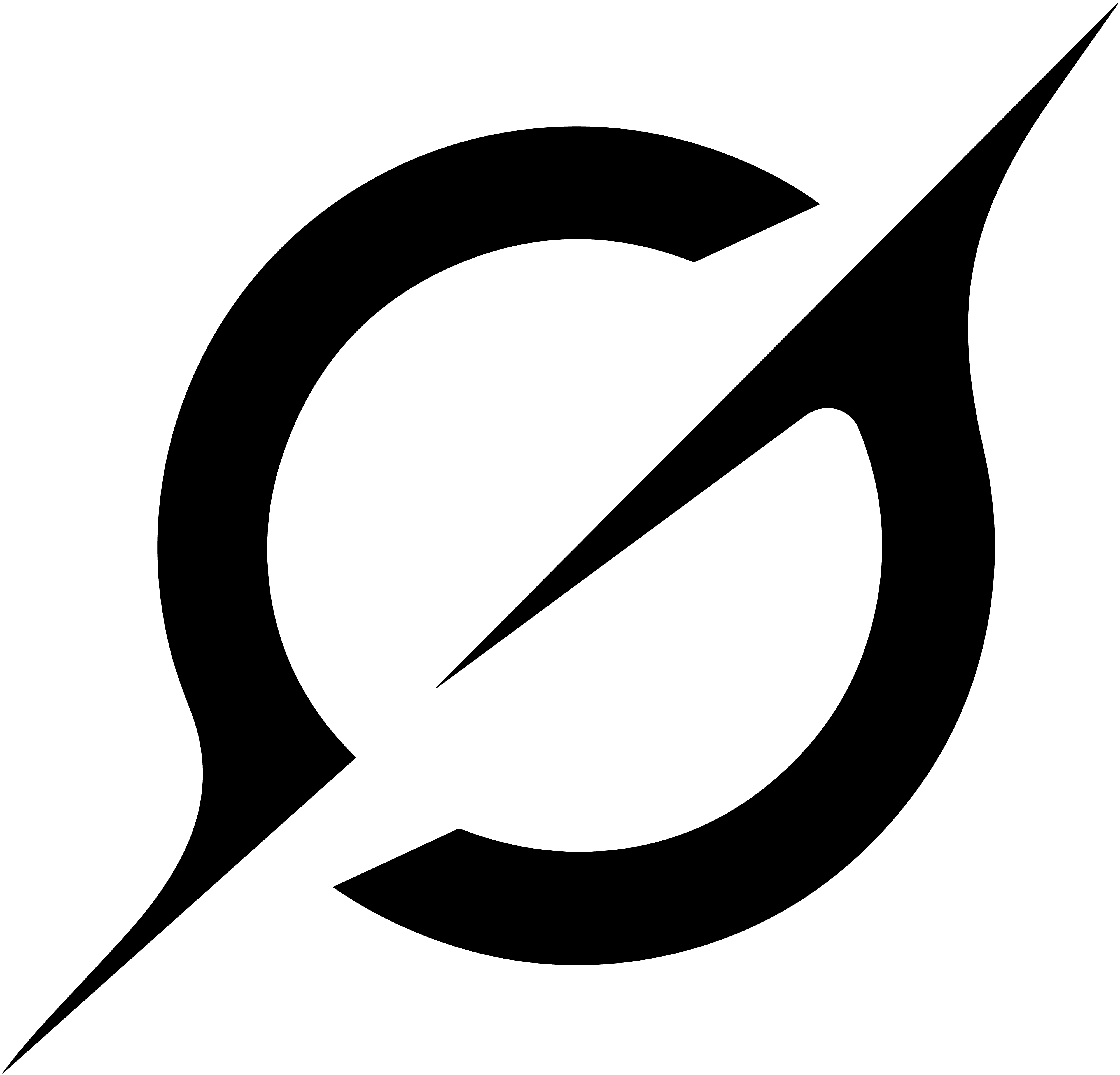} Grok-3 & \cellcolor{green!25} 32.9$_{2.9}$ & -0.90$_{0.08}$ & 60.1$_{2.8}$ & 0.65$_{0.04}$ & 91.0$_{1.6}$ & 0.73$_{0.03}$ &5.9$_{1.4}$ & 0.03$_{0.01}$ & 55.0$_{12.5}$ & 0.21$_{0.22}$ & 26.8$_{9.9}$ & 0.14$_{0.08}$ \\
\llmicon{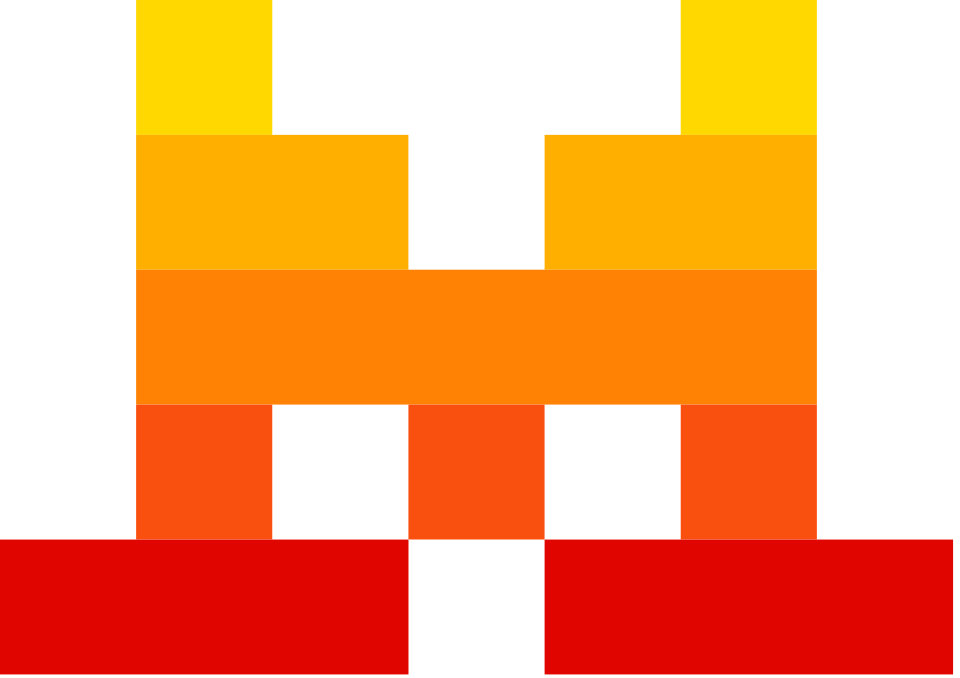} Mist(L) & 28.7$_{2.9}$ & -1.09$_{0.09}$ & 64.2$_{2.7}$ & 0.68$_{0.04}$ &\cellcolor{red!25} 75.3$_{2.3}$ & 0.51$_{0.05}$ & 11.1$_{1.9}$ & 0.07$_{0.01}$ & 43.3$_{11.7}$ & -0.31$_{0.35}$ & 44.0$_{11.3}$ & 0.26$_{0.13}$ \\
\addlinespace
\rowcolor{categorygray}
\multicolumn{13}{c}{\textbf{Mid-tier Models (10B to 70B params)}} \\
\llmicon{icon/grok.png} G3 Mini & 19.6$_{2.5}$ &-0.86$_{0.07}$ & 60.8$_{2.6}$ & 0.71$_{0.04}$ & 86.9$_{1.8}$ & 0.68$_{0.04}$ & 4.4$_{1.5}$ & 0.04$_{0.01}$ & \cellcolor{green!25}85.0$_{9.2}$ & 0.48$_{0.44}$ & \cellcolor{green!25} 5.9$_{6.3}$ &\cellcolor{green!25} 0.05$_{0.09}$ \\
\llmicon{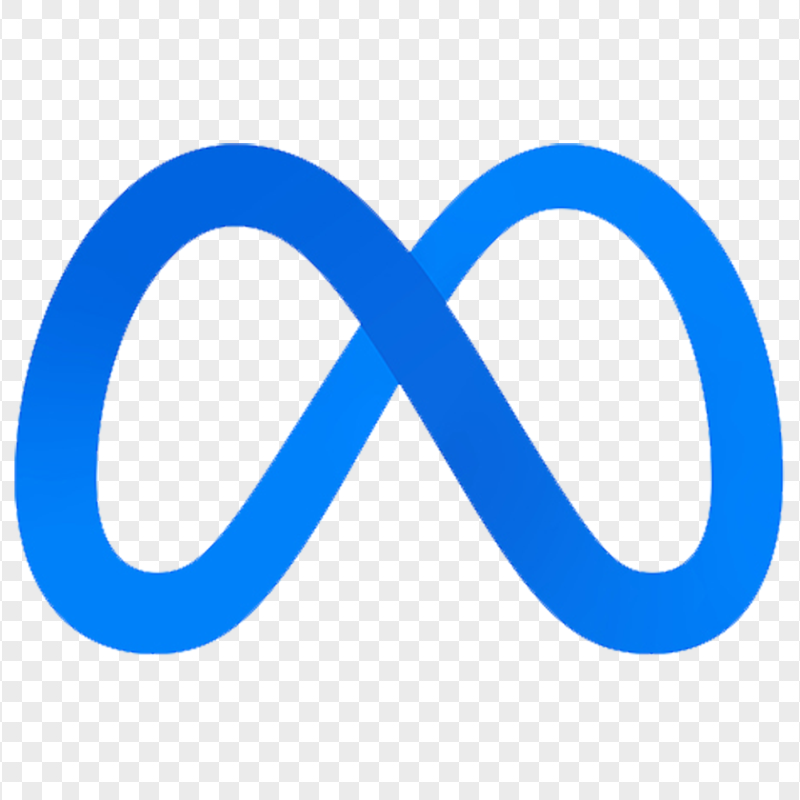} Llama 3.3 & 19.4$_{2.4}$ & -2.97$_{0.24}$ & 68.1$_{2.6}$ & 0.75$_{0.04}$ & 84.0$_{2.0}$ & 0.57$_{0.04}$ & 3.6$_{1.5}$ & 0.10$_{0.02}$ & 8.3$_{7.5}$ & -3.81$_{1.04}$ & 52.8$_{12.3}$ & 0.82$_{0.12}$ \\
\llmicon{icon/mistral.png} Mist(M) & 20.2$_{2.5}$ & -1.26$_{0.12}$ & 68.8$_{2.4}$ & 0.74$_{0.04}$ & 84.8$_{2.0}$ & 0.57$_{0.05}$ & 6.4$_{1.9}$ & 0.08$_{0.02}$ & 30.0$_{11.7}$ & -1.42$_{0.75}$ & 54.8$_{10.8}$ & 0.50$_{0.17}$ \\
\addlinespace
\rowcolor{categorygray}
\multicolumn{13}{c}{\textbf{Small / Efficient Models (<10B params or highly optimized)}} \\
\llmicon{icon/openai_logo.png} 4o-mini & 8.9$_{1.8}$ & \cellcolor{red!25}-3.86$_{0.21}$ & \cellcolor{red!25}84.2$_{1.8}$ & 0.89$_{0.03}$ & 79.8$_{2.2}$ & 0.48$_{0.05}$ & 10.6$_{2.1}$ & 0.10$_{0.02}$ & 11.7$_{7.5}$ &-5.01$_{0.95}$ & 81.4$_{9.3}$ & 0.78$_{0.13}$ \\
\llmicon{icon/mistral.png} Mist(S) & 10.9$_{2.0}$ & -1.63$_{0.07}$ & 82.6$_{1.9}$ & 0.89$_{0.03}$ & 80.0$_{2.2}$ & 0.47$_{0.05}$ & 11.4$_{2.1}$ & \cellcolor{red!25}0.11$_{0.02}$ & \cellcolor{red!25}5.0$_{5.8}$ & -1.68$_{0.56}$ & 62.5$_{11.4}$ & \cellcolor{red!25}0.88$_{0.11}$ \\
\llmicon{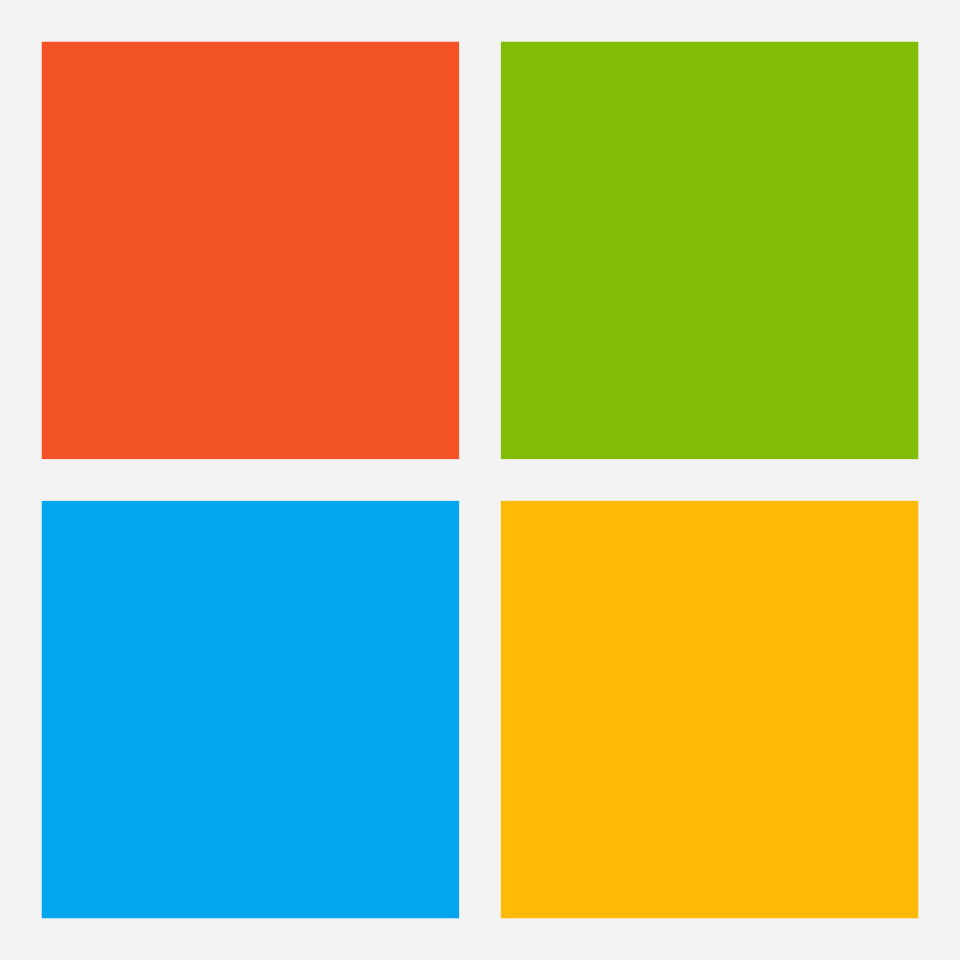} Phi-4 &\cellcolor{red!25} 8.5$_{1.7}$ & -3.63$_{0.22}$ & 84.1$_{1.8}$ & \cellcolor{red!25}0.91$_{0.03}$ & 80.8$_{2.2}$ &\cellcolor{red!25} 0.39$_{0.07}$ & \cellcolor{red!25}14.2$_{2.1}$ & \cellcolor{red!25}0.11$_{0.02}$ & 15.0$_{9.2}$ &\cellcolor{red!25} -6.52$_{0.87}$ & \cellcolor{red!25}84.7$_{9.1}$ & 0.80$_{0.12}$ \\
\addlinespace
\bottomrule
\end{tabular*}
}%
\vspace{-1.5mm}
\caption{Evaluation of models across SimpleQA, MedQA, and AIME. Results highlight that confidence reliability is strongly task-dependent and does not always align with accuracy, with even frontier models exhibiting severe overconfidence in challenging settings. }\vspace{-2.0mm}
\label{tab:results}
\end{table}

Differences between models are most apparent in reasoning-heavy tasks such as AIME. For example, GPT-4o achieves 11.7\% accuracy with a strongly negative BAS of -5.06, whereas GPT-oss achieves 75.0\% accuracy and a positive BAS of 0.57. While GPT-4o outperforms smaller models such as Mistral (S) in terms of accuracy, its substantially lower BAS indicates much more severe overconfidence in its incorrect predictions. Similarly, on SimpleQA, GPT-oss achieves lower accuracy but substantially better BAS than Llama 3.3, further highlighting that model capability and confidence reliability do not always align.

\textbf{Takeaway.}
Confidence reliability is both task- and model-dependent: models can accurately estimate their confidence on some tasks while exhibiting severe overconfidence on others, and even frontier models may fail to provide reliable confidence in challenging settings.

\subsection{Scaling and Performance Trends}

Given the task- and model-dependent variation observed in Section~\ref{sec:results_main}, we next examine how confidence reliability relates to model scale and predictive performance. Figure~\ref{fig:scale_accuracy_metrics} (top) shows accuracy, BAS, ECE, and AURC as a function of model size, for a subset of models with known parameter counts. Across these models, we observe a consistent trend in which larger models tend to achieve \emph{both} higher accuracy and more reliable confidence estimates, although substantial variation remains across models. Figure~\ref{fig:scale_accuracy_metrics} (bottom) shows a similar relationship when comparing predictive performance and reliability directly, for all twelve evaluated LLMs. Models with higher accuracy tend to exhibit higher BAS and lower ECE and AURC, suggesting that better models are not only more accurate, but also tend to provide more reliable confidence estimates. While these relationships are not uniform, and several models deviate from the overall trends in Figure~\ref{fig:scale_accuracy_metrics}, this suggests that improvements in model capability are often accompanied by improvements in confidence estimation.

\textbf{Takeaway.}
Confidence reliability generally improves both with model scale and predictive performance: larger and more accurate models tend to also provide more reliable, decision-useful confidence estimates.

\begin{figure}[t]
\centering
    \begin{subfigure}[t]{1.0\textwidth}
        \centering%
        \includegraphics[clip, trim=0.0cm 1.5cm 0.0cm 0.0cm, width=1.0\linewidth]{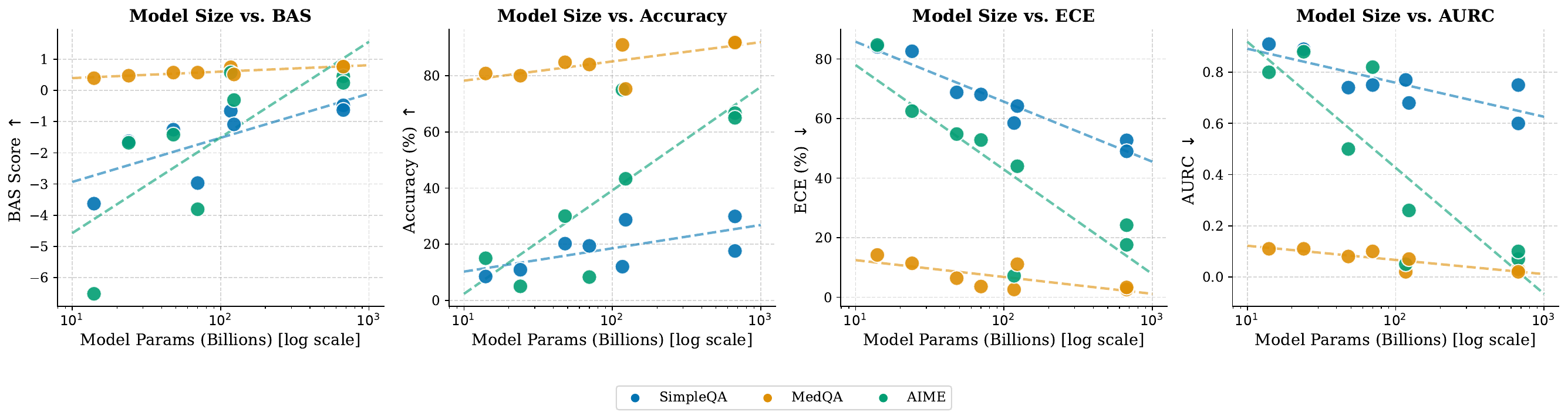}
    \end{subfigure}
    \begin{subfigure}[t]{1.0\textwidth}
        \centering%
        \includegraphics[width=0.70\linewidth]{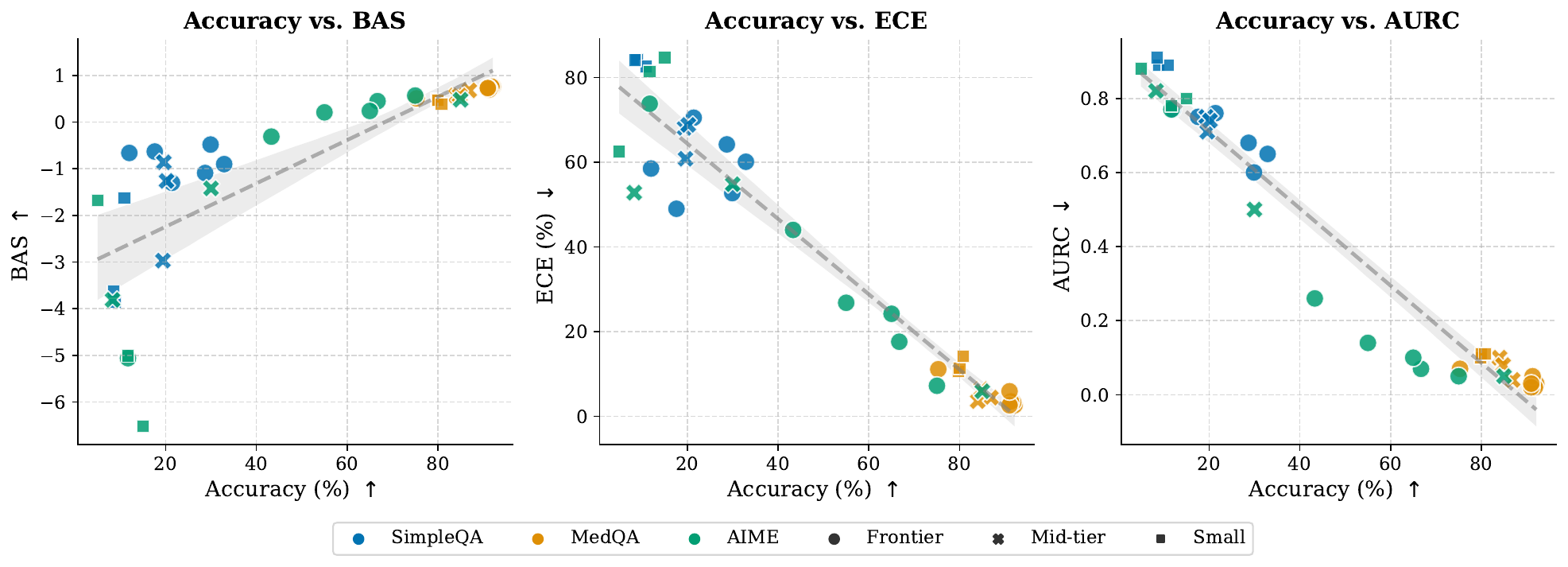}
    \end{subfigure}\vspace{-1.5mm}
  \caption{Relationship between model scale, predictive performance, and confidence reliability. \textbf{Top:} Model size vs. accuracy and reliability metrics. Larger models tend to achieve higher accuracy \textit{and} improved confidence reliability (higher BAS, lower ECE and AURC), although substantial variation remains across models. \textbf{Bottom:} Accuracy vs. reliability metrics. Models with higher accuracy tend to also exhibit more reliable confidence estimates.}\vspace{-2.0mm}
  \label{fig:scale_accuracy_metrics}
\end{figure}

\subsection{Confidence Elicitation Methods}
\label{section:results_conf_elicitation}

Because black-box LLMs do not expose native probability distributions, the method used to elicit confidence is a key factor in determining reliability. Table~\ref{tab:conf_elicitation_results} reports BAS across four confidence elicitation methods on SimpleQA, for a subset of five representative models, while Figure~\ref{fig:conf_elicitation_results} provides results also for accuracy, ECE, and AURC.

We find that the elicitation method has a large impact on confidence reliability. Under the default \emph{direct elicitation} method, all models exhibit clear overconfidence, resulting in consistently negative BAS. Prompting models to instead consider multiple candidate answers (\emph{top-$k$}, with $k = 3$) consistently improves reliability, yielding the highest BAS across all models. For example, Llama-3.3-70B improves from -2.97 to -0.25, indicating a substantial reduction in overconfident errors. Figure~\ref{fig:conf_elicitation_results} shows that accuracy and AURC remain largely unchanged across methods, whereas ECE consistently improves when moving away from direct elicitation. While \emph{self-reflection} improves ECE compared to direct elicitation, it yields mixed results in terms of BAS. In particular, combining it with top-$k$ (\emph{top-$k$ + self-reflection}) consistently degrades BAS compared to top-$k$ alone. From a practical perspective, top-$k$ is particularly appealing: it requires only a single forward pass, unlike self-reflection methods which require a two-step prompting procedure.

\textbf{Takeaway.}
Confidence elicitation strongly affects reliability: simple strategies such as top-$k$ can substantially reduce overconfidence and improve calibration, without increasing inference cost or significantly affecting model accuracy.

\subsection{Effect of Post-hoc Calibration}
We next study post-hoc calibration based on a validation set as a practical mechanism for improving confidence reliability. Table~\ref{tab:calibration} reports BAS and ECE before and after applying isotonic regression on SimpleQA, for a subset of five representative models.

As previously observed, raw confidence estimates are overconfident, leading to high ECE and substantially negative BAS across models. After calibration, ECE decreases dramatically, typically from 60-90\% to low single digits, indicating that reported confidence becomes better aligned with empirical accuracy. BAS also improves substantially, often shifting from strongly negative values to near-zero. For example, Llama-3.3-70B improves from -2.97 to 0.02, whereas GPT-4o-mini improves from -3.86 to 0.001. We further analyze sensitivity to the validation set size for GPT-oss-120B in Table~\ref{tab:calibration_ablation}, finding that most gains already emerge with relatively small validation sets. These results suggest that post-hoc calibration can improve confidence reliability with limited additional data, although its effectiveness in practice will depend on how well the validation set reflects the deployment setting.

\textbf{Takeaway.}
Post-hoc calibration is a simple option that can substantially improve confidence reliability by reducing overconfidence, and may provide meaningful gains in practice even with relatively small validation sets.

\begin{table}[t]
\centering
\small
\renewcommand{\arraystretch}{1.1}
\resizebox{0.875\textwidth}{!}{%
\begin{tabular}{lcccc}
\toprule
\textbf{Model} & \textbf{Direct Elicitation} & \textbf{Self-Reflection} & \textbf{Top-$k$} & \textbf{Top-$k$ + Self-Reflection} \\
\midrule
\llmicon{icon/openai_logo.png} GPT-4o-mini          & $-3.86 \pm 0.21$ & $-0.94 \pm 0.09$ & $\mathbf{-0.42 \pm 0.02}$ & $-0.91 \pm 0.14$ \\
\llmicon{icon/openai_logo.png} GPT-oss-120B         & $-0.66 \pm 0.04$ & $-0.61 \pm 0.05$ & $\mathbf{-0.22 \pm 0.02}$ & $-0.31 \pm 0.03$ \\
\llmicon{icon/grok.png} Grok-3-Mini          & $-0.86 \pm 0.07$ & $-0.92 \pm 0.08$ & $\mathbf{-0.50 \pm 0.05}$ & $-0.58 \pm 0.07$ \\
\llmicon{icon/llama.png} Llama-3.3-70B        & $-2.97 \pm 0.23$ & $-2.05 \pm 0.21$ & $\mathbf{-0.25 \pm 0.06}$ & $-0.54 \pm 0.11$ \\
\llmicon{icon/microsoft.png} Phi-4                & $-3.63 \pm 0.21$ & $-3.60 \pm 0.22$ & $\mathbf{-0.61 \pm 0.08}$ & $-2.23 \pm 0.20$ \\
\bottomrule
\end{tabular}%
}
\vspace{-1.5mm}
\caption{Comparison of BAS across four confidence elicitation methods on SimpleQA. Top-$k$ consistently yields the highest BAS across a subset of five representative models.}\vspace{-2.0mm}
\label{tab:conf_elicitation_results}
\end{table}

\begin{table}[t]
\centering
\small
\resizebox{0.70\textwidth}{!}{%
\begin{tabular}{l cc cc}
\toprule
& \multicolumn{2}{c}{\textbf{BAS $\uparrow$}} 
& \multicolumn{2}{c}{\textbf{ECE (\%) $\downarrow$}} \\
\cmidrule(lr){2-3} \cmidrule(lr){4-5}
\textbf{Model} 
& Before & After 
& Before & After \\
\midrule
\llmicon{icon/openai_logo.png} GPT-4o-mini   & $-3.86 \pm 0.21$ & $\mathbf{\phantom{-}0.001 \pm 0.001}$ & $88.6 \pm 1.4$ & $\mathbf{3.6 \pm 1.3}$ \\
\llmicon{icon/openai_logo.png} GPT-oss-120B  & $-0.66 \pm 0.04$ & $\mathbf{-0.000 \pm 0.006}$           & $64.2 \pm 1.8$ & $\mathbf{6.0 \pm 1.4}$ \\
\llmicon{icon/grok.png} Grok-3-Mini   & $-0.88 \pm 0.07$ & $\mathbf{\phantom{-}0.026 \pm 0.008}$ & $60.7 \pm 2.7$ & $\mathbf{1.7 \pm 1.7}$ \\
\llmicon{icon/llama.png} Llama-3.3-70B & $-2.97 \pm 0.23$ & $\mathbf{\phantom{-}0.024 \pm 0.007}$ & $68.1 \pm 2.6$ & $\mathbf{1.5 \pm 1.7}$ \\
\llmicon{icon/microsoft.png} Phi-4         & $-3.63 \pm 0.21$ & $\mathbf{\phantom{-}0.004 \pm 0.001}$ & $84.1 \pm 1.8$ & $\mathbf{1.8 \pm 1.4}$ \\
\bottomrule
\end{tabular}
}
\vspace{-1.5mm}
\caption{Effect of post-hoc calibration via isotonic regression on SimpleQA. Calibration substantially reduces ECE and improves BAS across a subset of five representative models.}\vspace{-2.0mm}
\label{tab:calibration}
\end{table}

\subsection{Comparison of BAS with Existing Metrics}
\label{section:results_metrics_comp}

We first analyze how BAS differs from the standard reliability metrics ECE and AURC. Figure~\ref{fig:bas_vs_metrics} shows the relationship between BAS and these metrics across all model-dataset pairs. While BAS is generally correlated with both ECE and AURC, we observe notable deviations. In particular, similar ECE can be achieved with substantially different BAS.

For example, on AIME, Llama~3.3 and Mistral~(M) achieve similar ECE (52.8\% and 54.8\%, Table~\ref{tab:results}), but differ substantially in BAS (-3.81 vs.\ -1.42). In this case, AURC also differs (0.82 vs.\ 0.50), indicating that both ranking and confidence magnitude contribute to the discrepancy in decision-level reliability. However, similar divergences appear even when AURC is nearly identical. On SimpleQA, Llama~3.3 and Mistral~(M) achieve nearly identical ECE (68.1\% and 68.8\%) \emph{and} AURC (0.75 and 0.74), yet exhibit substantially different BAS (-2.97 vs.\ -1.26). These differences can be explained by the distribution of confidence values, as shown in Figure~\ref{fig:confidence_histograms}. While both Llama~3.3 and Mistral~(M) assign high confidence to many predictions, Llama~3.3 exhibits a stronger concentration of incorrect predictions at very high confidence (near $s \approx 1$). These highly overconfident errors contribute disproportionately to BAS due to its logarithmic penalty, while having only a limited effect on ECE. As a result, models with similar ECE (and even similar AURC) can exhibit substantially different BAS when their confidence distributions differ in the tail behavior.

Next, we compare BAS with the proper scoring rule log loss. While BAS structurally differs from log loss by imposing an asymmetric penalty that strongly prioritizes avoiding overconfident errors, Figure~\ref{fig:bas_vs_logloss} shows that they are highly correlated across model-dataset pairs in practice. This can also be explained by the confidence distribution (Figure~\ref{fig:confidence_histograms}): since modern LLMs are much more prone to overconfidence than underconfidence, both BAS and log loss are dominated by the logarithmic penalty $\ln(1 - s)$ on highly confident errors in our evaluation. Nevertheless, the structural difference remains, enabling BAS to distinguish between models that trade off overconfidence with underconfidence (Appendix~\ref{sec:appendix_metrics_comparison_logloss}). This asymmetry is well motivated from a decision-theoretic perspective, where overly conservative but safe behavior induced by underconfidence is often preferable to high-confidence errors.

\textbf{Takeaway.}
ECE and AURC capture complementary but incomplete aspects of model confidence, whereas BAS provides additional insight by identifying highly overconfident errors that dominate decision-level risk, and by distinguishing models that differ in how they trade off overconfidence and underconfidence.

\section{Related Works}

\textbf{Selective prediction and abstention.}
Selective prediction, or a model's ability to reject uncertain inputs, studies the trade-off between accuracy and coverage \citep{chow2003optimum, el2010foundations, geifman2017selective}. Recent work extends this to LLMs and QA tasks. Abstain-QA and the Answerable Unanswerable Confusion Matrix (AUCM) provide black-box evaluation methods suited for proprietary models \citep{madhusudhan2024llms}. Large-scale benchmarks like AbstentionBench test unanswerability from underspecified inputs, false premises, stale data, and subjective queries, showing that reasoning fine-tuning can worsen abstention \citep{kirichenko2025abstentionbench}. System-level studies also integrate abstention into LLM pipelines \citep{zellinger2025cost}. 

\textbf{Uncertainty quantification and calibration.} Uncertainty quantification is a popular field that focuses on estimating and calibrating model uncertainty. Such methods include token- or sequence-level likelihoods, sampling-based dispersion metrics, and verbalized/self-reflected confidence scores \citep{farquhar2024detecting, phillips2025geometric, phillips2026semantic, phillips2026entropyinsufficientsafeselective,li2025semanticvolumequantifyingdetecting, taubenfeld2025confidence, kadavath2022language}. Other research analyzes how alignment processes (instruction tuning / RLHF) can degrade logit-based calibration in the multiple-choice setting \citep{he2023investigating}. 

\textbf{Alignment, truthfulness, and post-training objectives.}
A number of recent works aim to reduce hallucination by changing training objectives. TruthRL uses an RL objective with a ternary reward to incentivize truthfulness \citep{wei2025truthrl}, though such RL methods can be unstable on smaller models \citep{wu2025mitigating}. Alternative approaches utilize Supervised Fine-Tuning (SFT) on carefully curated `refusal datasets' or rejection tokens to teach models when to abstain \citep{huang2025alleviating, zhang2025reasoning, jain2024refusal, xu2024editing}. 

\textbf{Summary and distinction.}
Prior work on LLM uncertainty has focused on estimating or calibrating confidence, as well as on metrics such as ECE and AURC. However, these widely used calibration- and ranking-based metrics do not capture the decision-level impact of overconfident errors. In contrast, BAS provides a unified, black-box, decision-theoretic metric that evaluates answer-or-abstain behavior under an explicit cost model. By asymmetrically penalizing highly confident errors, BAS captures aspects of confidence reliability that are not reflected in calibration, ranking, or symmetric scoring rules alone.

\section{Conclusion}

We introduce BAS, a decision-theoretic framework for evaluating LLM confidence reliability by rewarding calibrated abstention and penalizing overconfident errors. Our theoretical analysis shows that BAS defines a proper scoring objective, uniquely maximized when a model's expressed confidence matches its true probability of correctness. Empirically, we find that confidence reliability varies substantially across tasks, and while larger and more accurate models tend to also achieve higher BAS, even frontier models remain prone to severe overconfidence. We further show that models with similar calibration (ECE) or ranking (AURC) can exhibit highly different decision-level reliability under BAS, driven by rare but highly overconfident errors. While BAS is related to proper scoring rules such as log loss, it differs structurally by imposing an asymmetric penalty that strongly prioritizes avoiding overconfident errors. We also find that confidence reliability can be improved through simple interventions, such as top-$k$ elicitation and post-hoc calibration, often with relatively low additional cost. Overall, BAS provides a principled and practical framework for evaluating LLM confidence reliability and complements standard metrics in risk-sensitive settings.

\section*{Acknowledgments}

\textbf{Sean Wu} was supported by the Rhodes Scholarship. \textbf{Edward Phillips} was funded by an NIHR Research Studentship. \textbf{David A. Clifton} was funded by an NIHR Research Professorship; a Royal Academy of Engineering Research Chair; and the InnoHK Hong Kong Centre for Cerebro-cardiovascular Engineering (COCHE); and was supported by the National Institute for Health Research (NIHR) Oxford Biomedical Research Centre (BRC) and the Pandemic Sciences Institute at the University of Oxford.

\bibliography{colm2026_conference}
\bibliographystyle{colm2026_conference}

\renewcommand{\thefigure}{A\arabic{figure}}
\setcounter{figure}{0}
\renewcommand{\thetable}{A\arabic{table}}
\setcounter{table}{0}
\renewcommand{\theequation}{A\arabic{equation}}
\setcounter{equation}{0}

\newpage
\appendix

\section{BAS Theoretical Analysis}
\label{sec:appendix_bas_theoretical}

We now establish the theoretical soundness of the BAS framework. The following theorem (which is the same as Theorem~\ref{thm:bas_optimal}, repeated here for convenience) demonstrates that the BAS objective is decision-theoretically sound, uniquely maximizing expected utility when the model's expressed uncertainty aligns with its true probability of correctness.

\begin{theorem}[Optimality of BAS Utility]
\label{thm:bas_optimal_appendix}
Let $s \in [0, 1)$ be the reported model confidence and $p \in [0, 1]$ the true probability of correctness. The expected BAS utility (Eq.~\ref{eq:expected_bas}) is uniquely maximized when $s = p$ for all $p < 1$. For $p=1$, the expected utility is strictly increasing in $s$ and achieves its supremum as $s \to 1$.
\end{theorem}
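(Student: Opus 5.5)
The plan is to compute the expected BAS utility in closed form as a function of $s$ for fixed $p$, and then optimize over $s \in [0,1)$ by elementary calculus.

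\textbf{Reduction to a one-dimensional function.} Under the policy $\pi_s(t)$ in Eq.~\ref{eq:policy}, the decision-maker answers exactly when $t \le s$, and abstention yields zero utility by Eq.~\ref{eq:utility}. Taking the expectation over $Z$ with $\mathbb{P}(Z=1)=p$, Eq.~\ref{eq:expected_bas} therefore becomes
\begin{equation*}
F_p(s) \;=\; \int_0^s \Bigl( p - (1-p)\tfrac{t}{1-t} \Bigr)\, dt .
\end{equation*}
Using $\int_0^s \tfrac{t}{1-t}\,dt = -s - \ln(1-s)$, this gives
\begin{equation*}
F_p(s) = p s + (1-p)\bigl(s + \ln(1-s)\bigr) = s + (1-p)\ln(1-s).
\end{equation*}
This agrees with averaging the closed form Eq.~\ref{eq:bas_closed_form} over $Z$, which serves as a consistency check.

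\textbf{Optimization.} The function $F_p$ is smooth on $[0,1)$, with derivative
\begin{equation*}
F_p'(s) = 1 - \frac{1-p}{1-s} = \frac{p-s}{1-s}.
\end{equation*}
This derivative equals the integrand evaluated at $t=s$. Its sign is exactly the sign of $p-s$, so $F_p$ is strictly increasing on $[0,p]$ and strictly decreasing on $[p,1)$.

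\textbf{Case $p<1$.} Here $s=p$ lies in $[0,1)$, so it is the unique global maximizer. Uniqueness follows from strict monotonicity on each side of $p$; equivalently, $F_p''(s) = -(1-p)/(1-s)^2 < 0$ gives strict concavity. The boundary case $p=0$ needs a moment's care: then $F_0' < 0$ on $(0,1)$, so the maximum is attained at the endpoint $s=0$, and the formula is unchanged.

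\textbf{Case $p=1$.} Here $F_1(s)=s$ and $F_1'(s)=1>0$, so $F_1$ is strictly increasing. Its supremum over $[0,1)$ is $1$, approached as $s\to 1$ but never attained, which is why the theorem is phrased in terms of a supremum.

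\textbf{Main obstacle.} There is no real difficulty. The only points needing care are the evaluation of the integral including the logarithmic term, and the endpoint bookkeeping for $p \in \{0,1\}$ given the half-open domain $s\in[0,1)$.
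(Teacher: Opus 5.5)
Your proposal is correct and follows essentially the same route as the paper's proof: you integrate the expected selective utility over the answering region $[0,s]$ to get $s + (1-p)\ln(1-s)$, then use the first-order condition and strict concavity when $p<1$, and monotonicity of $F_1(s)=s$ when $p=1$. Your sign analysis of $F_p'(s) = (p-s)/(1-s)$ (which mirrors the paper's argument for the weighted variant) and your explicit handling of the endpoint $p=0$ are minor refinements, not a different approach.
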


\begin{proof}
Let $Z \in \{0, 1\}$ be a random variable indicating the correctness of the model's response, with $P(Z=1) = p$. The BAS framework defines a deterministic decision policy where the model answers if the risk threshold $t \le s$ and abstains otherwise.

The BAS score aggregates the selective utility $S_t$ (Eq. \ref{eq:utility}) across all risk thresholds $t \in [0, 1)$, assuming a uniform prior over risk tolerance. For a fixed threshold $t \le s$, the expected utility is:
\begin{equation*}
\mathbb{E}[S_t] = p - (1-p)\frac{t}{1-t}.
\end{equation*}
For $t > s$, the model abstains, yielding zero utility. Thus, the total expected BAS utility $\mathbb{E}[U(s)] = \int_0^1 \mathbb{E}[S_t]\, dt$ is obtained by integrating over the active region $[0, s]$:
\begin{equation*}
\mathbb{E}[U(s)] = \int_0^s \mathbb{E}[S_t]\, dt = \int_{0}^{s} \left( p - (1-p)\frac{t}{1-t} \right) dt = p \int_{0}^{s} 1 \, dt + (1-p) \int_{0}^{s} -\frac{t}{1-t} \, dt.
\end{equation*}
Evaluating the integrals:
\begin{align*}
\mathbb{E}[U(s)] &= ps + (1-p)\left[ s + \ln(1-s) \right] \\
&= s + (1-p)\ln(1-s).
\end{align*}
We analyze the maximization of this objective in two cases:

\textbf{Case 1: $p < 1$.} 
The first derivative with respect to $s$ is:
\begin{equation*}
\frac{\partial \mathbb{E}[U(s)]}{\partial s} = 1 - \frac{1-p}{1-s}.
\end{equation*}
Setting this to zero yields $1-s = 1-p$, implying $s = p$. The second derivative is:
\begin{equation*}
\frac{\partial^2 \mathbb{E}[U(s)]}{\partial s^2} = -\frac{1-p}{(1-s)^2}.
\end{equation*}
Since $p < 1$, the second derivative is strictly negative for all $s \in [0, 1)$. Thus, $\mathbb{E}[U(s)]$ is strictly concave and uniquely maximized at $s = p$.

\textbf{Case 2: $p = 1$.}
Substituting $p=1$ into the expected utility eliminates the penalty term, simplifying the objective to $\mathbb{E}[U(s)] = s$. This function is strictly increasing on $[0, 1)$. Consequently, the expected utility is maximized (as a supremum) as $s \to 1$, consistent with truthful reporting of certainty.
\end{proof}

\section{Weighted BAS for Safety-Critical Settings}
\label{sec:weighted_bas_consolidated}
The standard $\mathrm{BAS}$ metric assumes a uniform prior over the risk tolerance $t \in [0,1)$. However, real-world deployments often induce specific risk profiles (e.g., safety-critical settings place more weight on high thresholds). We generalize the metric by introducing a risk-prior $w(t)$ satisfying $\int_0^1 w(t) dt = 1$. The expected \emph{weighted} BAS utility is then defined as:
\begin{equation}
\label{eq:weighted_bas_expected}
\mathbb{E}[U_w(s)] := \int_{0}^{1} \mathbb{E}[S_t(Z, \pi_s(t))] \, w(t) \, dt,
\end{equation}
which is a direct generalization of Eq.~\ref{eq:expected_bas}. For evaluation, we compute the \emph{realized} per-example utility $U_w(s, Z)$ via integration over the answering region $t \in [0, s]$:
\begin{equation}
\label{eq:weighted_bas_realized}
U_w(s, Z) \;=\; \int_{0}^{s} \left[ Z \cdot 1 + (1-Z)\left(-\frac{t}{1-t}\right) \right] w(t) \, dt.
\end{equation}
For general weights $w(t)$ where no closed form solution exists, this contribution is computed via numerical quadrature.

Crucially, the property of truthfulness holds even under weighted aggregation:
\begin{theorem}[Optimality under Weighted Risk]
\label{thm:weighted_bas}
Let $s \in [0, 1)$ be the reported model confidence and $p \in [0, 1]$ the true probability of correctness. Let $w: [0, 1) \to \mathbb{R}_{\ge 0}$ be any integrable weighting function with $\int_0^1 w(t)\,dt = 1$ such that $w(t) > 0$ almost everywhere. Then the expected weighted BAS utility (Eq.~\ref{eq:weighted_bas_expected}) is uniquely maximized when $s = p$ for all $p < 1$. For $p=1$, the expected weighted utility is strictly increasing in $s$ and attains its supremum as $s\to 1$.
\end{theorem}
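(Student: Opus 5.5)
We follow the proof of Theorem~\ref{thm:bas_optimal_appendix}, replacing explicit antiderivatives with the fundamental theorem of calculus. Since the policy answers exactly when $t \le s$ and abstention yields zero utility, the expected weighted utility reduces to an integral over the answering region:
\begin{equation*}
F(s) := \mathbb{E}[U_w(s)] = \int_0^s g(t)\, w(t)\, dt,
\qquad
g(t) := p - (1-p)\frac{t}{1-t} = \frac{p-t}{1-t}.
\end{equation*}
The key observation is that $g(t)$ has the sign of $p - t$ on $[0,1)$: it is positive for $t < p$, zero at $t = p$, and negative for $t > p$. Because $w$ is nonnegative, the integrand $g\,w$ inherits this sign pattern.

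\textbf{Case $p < 1$.} We avoid any differentiability assumption on $w$ and compare $F(s)$ with $F(p)$ directly.
\begin{itemize}
\item If $s < p$, then $F(p) - F(s) = \int_s^p g\,w\,dt$. On $(s,p)$ we have $g > 0$, and $w > 0$ almost everywhere on this interval of positive length, so the integral is strictly positive.
\item If $p < s < 1$, then $F(s) - F(p) = \int_p^s g\,w\,dt$. On $(p,s)$ we have $g < 0$, and again $w > 0$ almost everywhere, so the integral is strictly negative.
\end{itemize}
Hence $s = p$ is the unique maximizer on $[0,1)$.

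\textbf{Case $p = 1$.} Here $g \equiv 1$, so $F(s) = \int_0^s w(t)\,dt$. For $s < s'$ we get $F(s') - F(s) = \int_s^{s'} w\,dt > 0$, since $w > 0$ almost everywhere; thus $F$ is strictly increasing. By monotone convergence, $F(s) \to \int_0^1 w = 1$ as $s \to 1$, and this limit is the supremum. It is not attained, because $s \in [0,1)$.

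\textbf{Integrability.} The main technical point is that $g\,w$ must be integrable on $[0,s]$ for each $s < 1$, so that $F$ is finite and the differences above are well defined. This holds because $g$ is bounded on $[0,s]$, with $|g(t)| \le \max\bigl(1, \tfrac{s}{1-s}\bigr)$, and $w \in L^1$. The penalty $t/(1-t)$ is unbounded only as $t \to 1$, which lies outside every answering region. So the argument relies only on $w > 0$ almost everywhere, which is exactly what the strict inequalities require.
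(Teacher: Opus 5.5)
Your proof is correct and starts from the same reduction as the paper: $F(s)=\int_0^s \frac{p-t}{1-t}\,w(t)\,dt$, with an integrand that has the sign of $p-t$. The difference is in how you use that sign pattern.

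The paper differentiates under the integral and writes $F'(s)=\frac{p-s}{1-s}\,w(s)$. It then argues that this derivative vanishes iff $s=p$ and is positive or negative on either side. You instead compare $F(s)$ with $F(p)$ directly, as integrals over $(s,p)$ or $(p,s)$.

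Your version is the one that holds at the stated generality. For a merely integrable $w$, the derivative formula is guaranteed only almost everywhere, and ``$w>0$ almost everywhere'' says nothing about $w(s)$ at a particular point. So the paper's ``iff'' can fail. For example, the continuous prior $w(t)=3(2t-1)^2$ is positive a.e. and integrates to $1$, yet gives $F'(1/2)=0$ for every $p$. The paper's conclusion still stands, but only because $F$ is absolutely continuous with $F(b)-F(a)=\int_a^b \frac{p-t}{1-t}\,w(t)\,dt$, and that is exactly your step.

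You also make explicit two points the paper glosses over:
\begin{itemize}
\item the integrand is integrable on $[0,s]$, since the penalty $t/(1-t)$ blows up only as $t\to1$;
\item for $p=1$ the supremum $1$ is approached but not attained on $[0,1)$.
\end{itemize}

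The paper's derivative route is shorter and parallels the first-order-condition proof of Theorem~\ref{thm:bas_optimal_appendix}. For the priors it actually uses ($1$, $2t$, $3t^2$), which vanish at most at $t=0$, it goes through as written.
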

\begin{proof}
Under the policy ``answer if and only if $t\le s$'', the expected selective utility is
\[
\mathbb{E}[S_t] =
\begin{cases}
p - (1-p)\frac{t}{1-t}, & t\le s,\\
0, & t>s.
\end{cases}
\]
Hence
\begin{equation}
\label{eq:Uw_integral}
\mathbb{E}[U_w(s)]
=
\int_0^s \left(p - (1-p)\frac{t}{1-t}\right)\, w(t)\,dt.
\end{equation}
Assuming $w$ is integrable, differentiation under the integral sign yields for $s\in(0,1)$:
\begin{align*}
\frac{d}{ds}\mathbb{E}[U_w(s)]
&=
\left(p - (1-p)\frac{s}{1-s}\right)\, w(s) \\
&=
\left( \frac{p(1-s) - s(1-p)}{1-s} \right) w(s) \\
&=
\frac{p-s}{1-s}\, w(s).
\end{align*}
Since $w(s)>0$ almost everywhere on $(0,1)$, the condition $\frac{d}{ds}\mathbb{E}[U_w(s)] = 0$ holds if and only if $s=p$. Moreover, for $p<1$, the derivative is positive for $s<p$ and negative for $s>p$, implying $s=p$ is the unique maximizer.

For $p=1$, the term in the integral of Eq.~\ref{eq:Uw_integral} becomes $1$, so $\mathbb{E}[U_w(s)]=\int_0^s w(t)\,dt$, which is strictly increasing in $s$ (since $w$ is positive) and achieves its supremum as $s\to 1$.
\end{proof}

To ensure standardized benchmarking and leaderboard compatibility, we establish the uniform prior ($w(t) = 1$) as the default $\mathrm{BAS}$ metric for all general-purpose LLM evaluation. The weighted variants are treated as domain-specific extensions for safety-critical settings.

\subsection{Weighted BAS Results}
We show in Table~\ref{tab:weighted_bas} the results of varying the risk-prior $w(t)$ and how it changes the relative performance across LLMs.

\begin{table}[h]
\centering
\small
\begin{tabular}{@{}llccc@{}}
\toprule
\textbf{Risk Profile} & \textbf{Weighting $w(t)$} & \textbf{SimpleQA} & \textbf{AIME} & \textbf{MedQA} \\
\midrule
\rowcolor{categorygray}
\multicolumn{5}{c}{\raisebox{-0.2\height}{\includegraphics[height=1.0em]{icon/deepseek_logo.png}} \quad \textbf{DeepSeek-R1} \vphantom{fg}} \\
General Purpose & Uniform: $w(t) = 1$ &  \textbf{-0.4847}  &  \textbf{0.4492} &  \textbf{0.7640} \\
Risk-Aware      & Linear: $w(t) = 2t$ & -0.7904  & 0.3706 & 0.6688 \\
Safety-Critical & Quadratic: $w(t) = 3t^2$ &  -0.9291 & 0.3299 &  0.5989 \\
\midrule
\rowcolor{categorygray}
\multicolumn{5}{c}{\raisebox{-0.2\height}{\includegraphics[height=1.0em]{icon/openai_logo.png}} \quad \textbf{GPT-4o} \vphantom{fg}} \\
General Purpose & Uniform: $w(t) = 1$ & \textbf{-1.1125} &  \textbf{-5.0629} &  \textbf{0.7030} \\
Risk-Aware      & Linear: $w(t) = 2t$ & -1.7744 &-9.5149 & 0.5680 \\
Safety-Critical & Quadratic: $w(t) = 3t^2$ & -2.1562 & -13.6129 & 0.4660 \\
\midrule
\rowcolor{categorygray}
\multicolumn{5}{c}{\raisebox{-0.2\height}{\includegraphics[height=1.0em]{icon/grok.png}} \quad \textbf{Grok-3} \vphantom{fg}} \\
General Purpose & Uniform: $w(t) = 1$ & \textbf{-0.8986}& \textbf{0.2101} & \textbf{0.7319} \\
Risk-Aware      & Linear: $w(t) = 2t$ & -1.5468 & 0.1017 & 0.6286  \\
Safety-Critical & Quadratic: $w(t) = 3t^2$ & -1.9446 & 0.0611 &  0.5556 \\
\bottomrule
\end{tabular}
\caption{Comparison of BAS for frontier models across tasks when varying the risk-prior $w(t)$. Higher scores indicate better alignment with expressed uncertainty. The standard $\mathrm{BAS}$ (Uniform: $w(t) = 1$) serves as the baseline, while Linear and Quadratic weights simulate increasingly safety-critical environments.}
\label{tab:weighted_bas}
\end{table}

\section{Relation to Existing Metrics: Additional Details}
\label{sec:appendix_metrics_comparison}

\subsection{Illustrative Examples: BAS vs ECE \& AURC}
We provide simple synthetic examples to illustrate how BAS can diverge from standard uncertainty metrics such as ECE and AURC. These examples highlight that calibration and ranking alone may not capture decision-level reliability under abstention.

\paragraph{Identical ECE, different BAS.}
Consider two models evaluated on four examples with correctness labels $Z = [1, 1, 0, 0]$. The models produce the following confidence values:
\begin{center}
\begin{tabular}{lcccc}
Example & 1 & 2 & 3 & 4 \\
\midrule
$Z$ & 1 & 1 & 0 & 0 \\
Model A ($s_A$)\hspace{3.0mm} & 0.7 & 0.7 & 0.3 & 0.3 \\
Model B ($s_B$) & 0.9 & 0.9 & 0.99 & 0.01 \\
\end{tabular}
\end{center}
Using the unbinned calibration error $\mathrm{ECE} = \frac{1}{N} \sum_i |s_i - Z_i|$, both models have identical ECE:
\[
\mathrm{ECE}_A = \mathrm{ECE}_B = 0.30.
\]
However, their BAS values differ significantly. Using Eq.~\ref{eq:bas_closed_form}~\&~\ref{eq:BAS_metric}, we obtain:
\[
\mathrm{BAS}_A \approx 0.32, \qquad \mathrm{BAS}_B \approx -0.45.
\]
The difference arises from a single highly overconfident error ($s=0.99$, $Z=0$) in Model B, which incurs a large negative penalty under BAS due to the logarithmic term. This example shows that ECE treats overconfidence and underconfidence symmetrically, whereas BAS penalizes catastrophic overconfidence more strongly.

\paragraph{Identical AURC, different BAS.}
Consider two models evaluated on six examples with $Z = [0, 1, 1, 1, 0, 0]$. Both models produce confidence values with identical ranking:
\begin{center}
\begin{tabular}{lcccccc}
Example & 1 & 2 & 3 & 4 & 5 & 6 \\
\midrule
$Z$ & 0 & 1 & 1 & 1 & 0 & 0 \\
Model A ($s_A$)\hspace{3.0mm} & 0.90 & 0.80 & 0.70 & 0.40 & 0.30 & 0.20 \\
Model B ($s_B$) & 0.99 & 0.85 & 0.75 & 0.45 & 0.35 & 0.25 \\
\end{tabular}
\end{center}
Since AURC depends only on the relative ranking/ordering of confidence values, both models achieve identical AURC. However, BAS differs:
\[
\mathrm{BAS}_A \approx 0.07, \qquad \mathrm{BAS}_B \approx -0.28.
\]
The discrepancy is again driven by extreme overconfidence. In Model B, the highest-confidence example is incorrect ($s=0.99$, $Z=0$), resulting in a large negative contribution. This penalty is not captured by AURC, which is invariant to monotonic transformations of the confidence values.

\paragraph{Summary.}
These examples illustrate that: (1) ECE evaluates calibration but does not account for decision consequences or asymmetric risk. (2) AURC evaluates ranking but ignores the magnitude of confidence. (3) BAS evaluates decision utility, capturing both calibration, ranking, and the cost of overconfident errors. As a result, BAS can distinguish models that appear equivalent under these standard metrics, particularly in settings where rare but highly confident errors dominate decision risk.

\subsection{Illustrative Examples: BAS vs Log Loss \& Brier Score}
\label{sec:appendix_metrics_comparison_logloss}

We provide two simple synthetic examples to illustrate how models can have identical log loss and Brier score but different BAS.

\paragraph{Example 1.}
\begin{center}
\begin{tabular}{lcccc}
Example & 1 & 2 & 3 & 4 \\
\midrule
$Z$ & 1 & 1 & 0 & 0 \\
Model A ($s_A$)\hspace{3.0mm} & 0.90 & 0.01 & 0.10 & 0.10 \\
Model B ($s_B$) & 0.90 & 0.90 & 0.10 & 0.99 \\
\end{tabular}
\end{center}
Both models have identical log loss and Brier score:
\[
\mathrm{LogLoss}_A = \mathrm{LogLoss}_B \approx 1.23, \qquad \mathrm{Brier}_A = \mathrm{Brier}_B \approx 0.25.
\]
However, their BAS values differ:
\[
\mathrm{BAS}_A \approx 0.22, \qquad \mathrm{BAS}_B \approx -0.46.
\]

\paragraph{Example 2.}
\begin{center}
\begin{tabular}{lcccc}
Example & 1 & 2 \\
\midrule
$Z$ & 1 & 0\\
Model A ($s_A$)\hspace{3.0mm} & 0.999 & 0.999 \\
Model B ($s_B$) & 0.001 & 0.001 \\
\end{tabular}
\end{center}
Both models have identical log loss and Brier score:
\[
\mathrm{LogLoss}_A = \mathrm{LogLoss}_B \approx 3.45, \qquad \mathrm{Brier}_A = \mathrm{Brier}_B \approx 0.50.
\]
However, their BAS values differ:
\[
\mathrm{BAS}_A \approx -2.45, \qquad \mathrm{BAS}_B \approx 0.0005.
\]

\paragraph{Summary.}
These examples highlight that log loss and Brier score are invariant to permutations of confidence values across examples, treating underconfidence and overconfidence symmetrically. In contrast, BAS imposes an asymmetric penalty that strongly penalizes overconfident errors, which can lead to fundamentally different evaluations.

\subsection{Experimental Results}

Figure~\ref{fig:bas_vs_metrics} shows the relationship between BAS and the standard reliability metrics (ECE, AURC) across all model-dataset pairs, as discussed in Section~\ref{section:results_metrics_comp}.

\begin{figure}[t]
    \centering
    \includegraphics[clip, trim=12.0cm 0.0cm 0.0cm 0.0cm, width=0.90\textwidth]{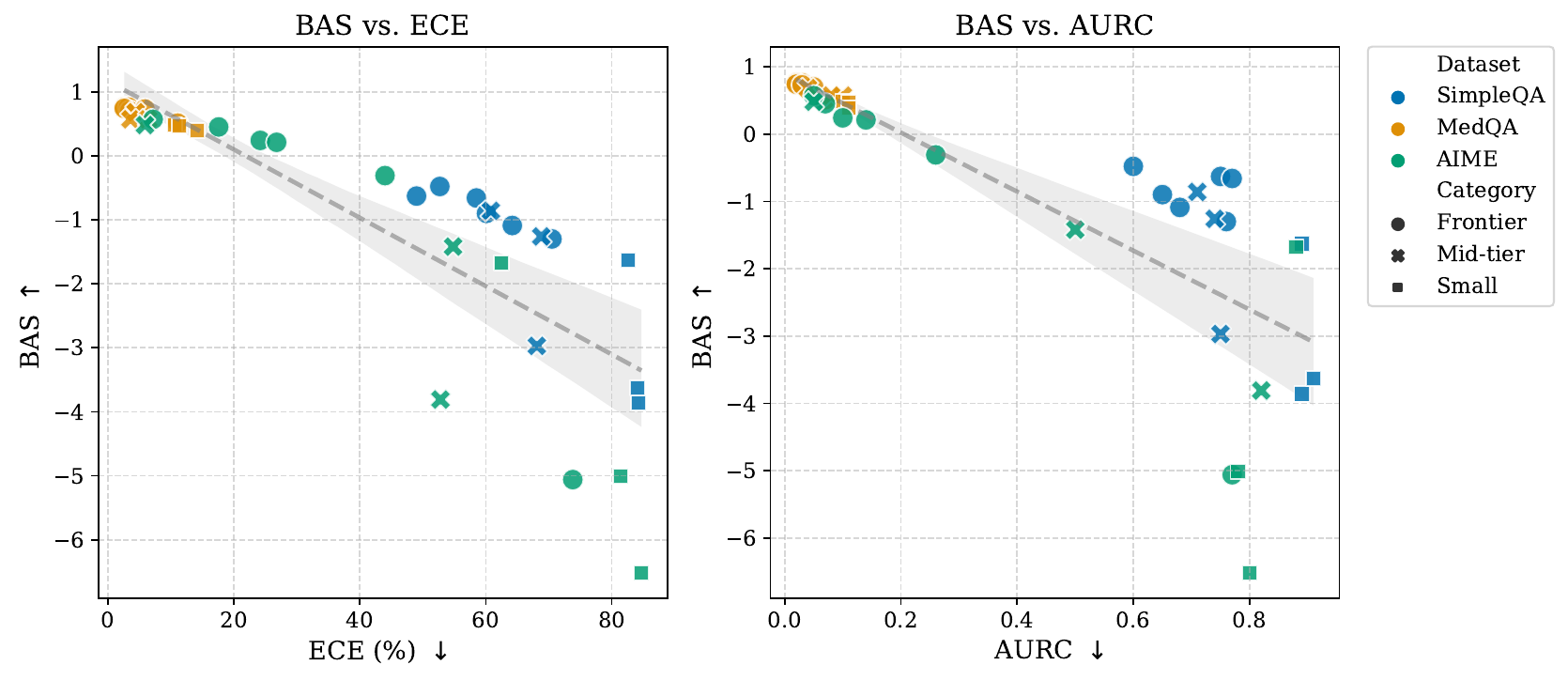}\vspace{-1.0mm}
    \caption{Relationship between BAS and the standard reliability metrics (ECE, AURC) across all model-dataset pairs. While BAS is correlated with ECE and AURC, several points deviate substantially from the trend, indicating that models with similar confidence calibration or ranking can exhibit highly different decision-level reliability.}
    \label{fig:bas_vs_metrics}
\end{figure}

Figure~\ref{fig:confidence_histograms} shows the distribution of predicted confidence values for Llama 3.3 and Mistral (M) on AIME and SimpleQA, as discussed in Section~\ref{section:results_metrics_comp}.

\begin{figure}[t]
    \centering
    \includegraphics[width=1.0\textwidth]{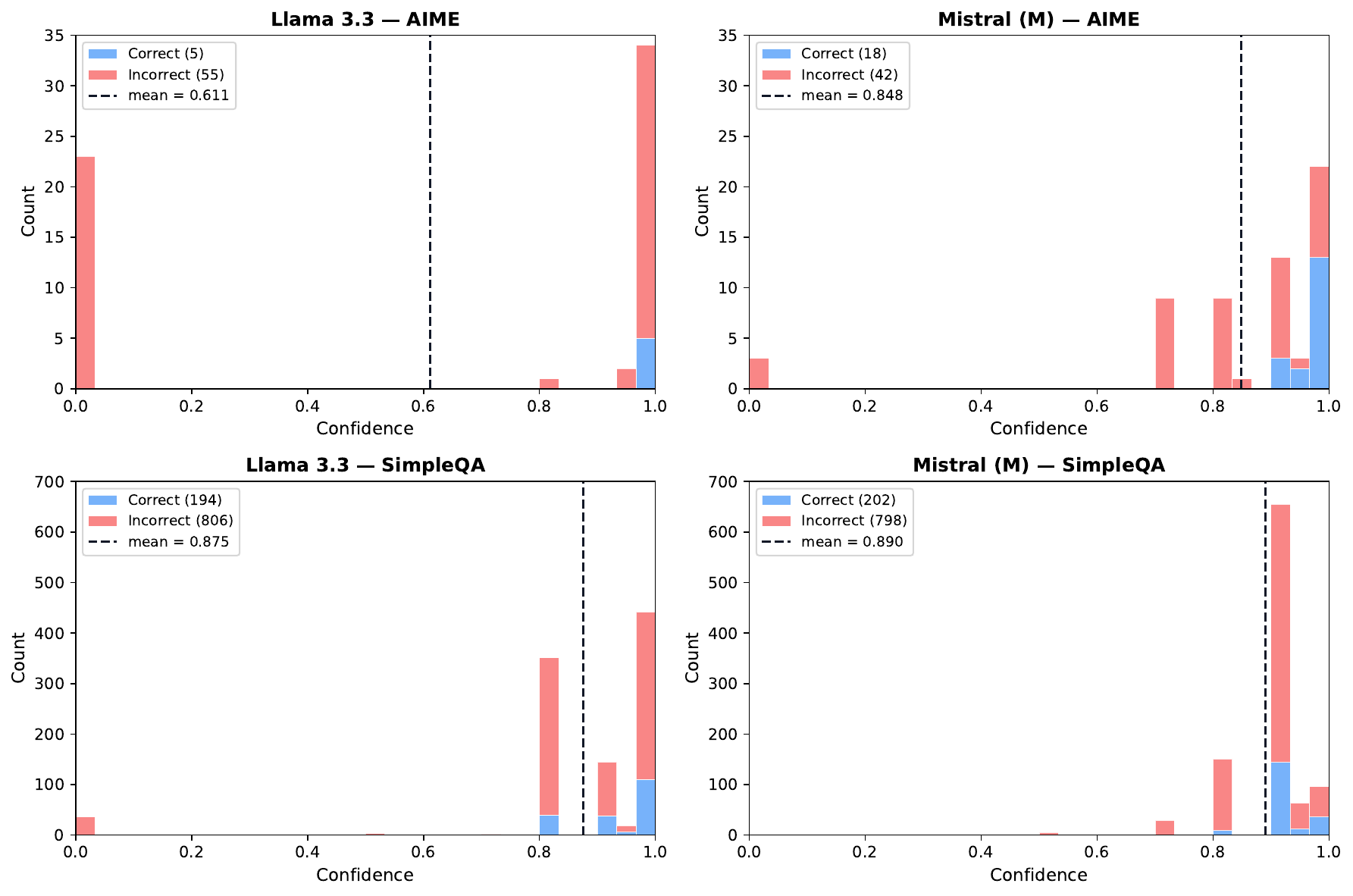}\vspace{-1.0mm}
    \caption{Distribution of predicted confidence values for Llama 3.3 and Mistral (M) on AIME and SimpleQA, separated by correctness. While both models exhibit high-confidence predictions, Llama 3.3 assigns extremely high confidence to a larger number of incorrect answers. These rare but highly overconfident errors are strongly penalized by BAS but contribute only modestly to ECE, explaining the observed discrepancies between the metrics.}
    \label{fig:confidence_histograms}
\end{figure}

Figure~\ref{fig:bas_vs_logloss} shows the relationship between BAS and the proper scoring rule log loss across all model-dataset pairs, as discussed in Section~\ref{section:results_metrics_comp}.

\begin{figure}[t]
    \centering
    \includegraphics[width=1.0\textwidth]{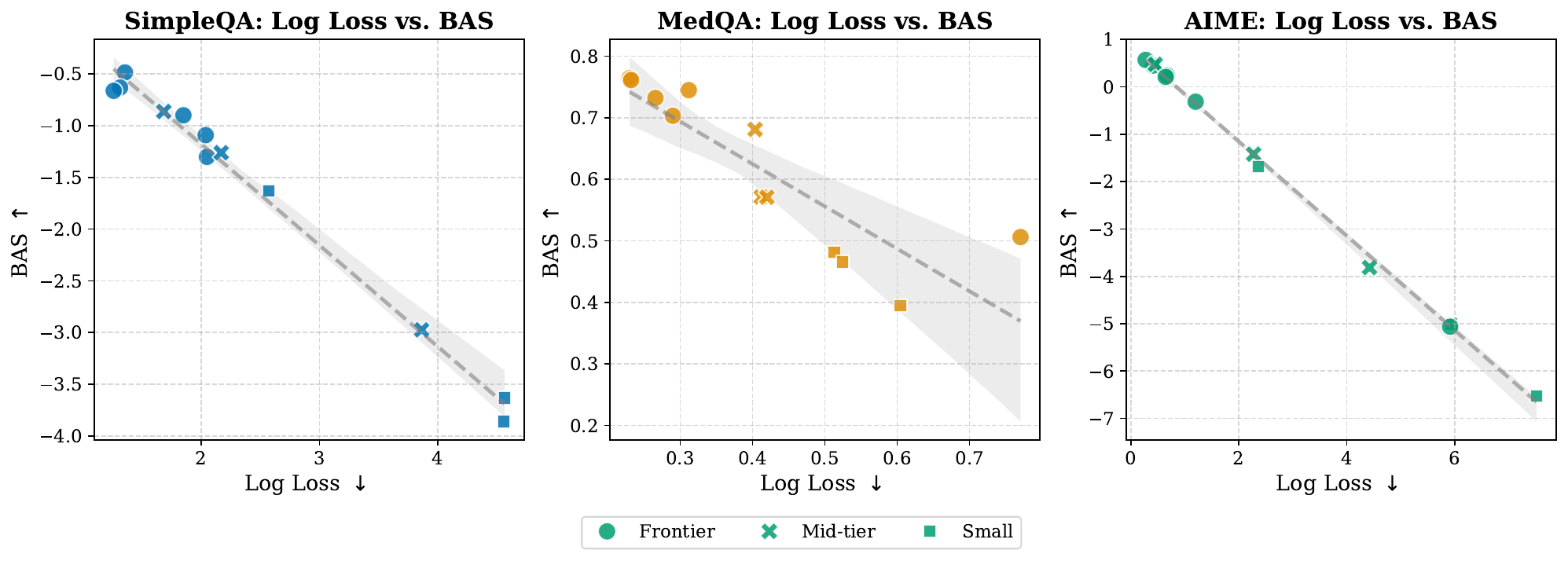}\vspace{-1.0mm}
    \caption{Relationship between BAS and the proper scoring rule log loss across all model-dataset pairs. The two metrics are highly correlated in practice, reflecting their shared sensitivity to high-confidence errors and the clear tendency of modern LLMs to exhibit overconfidence rather than underconfidence. However, BAS differs structurally by imposing an asymmetric penalty that strongly prioritizes avoiding overconfident errors, enabling it to distinguish between models that trade off overconfidence with underconfidence.}
    \label{fig:bas_vs_logloss}
\end{figure}

\section{Additional Experimental Details}

\subsection{Black-box Evaluation Protocol}
BAS requires only text-level access to the model and is compatible with both open-weight and proprietary systems. For each benchmark example, the model is queried once to produce a response $R$ and an associated confidence value $s$. During evaluation, we compute the per-example score using Eq.~\ref{eq:bas_closed_form} (or the numerical weighted variant). To ensure numerical stability, we clip $s$ to $\bar{s} = \min(s, 1-\epsilon)$ with a fixed $\epsilon=10^{-4}$ before computing Eq.~\ref{eq:bas_closed_form} or any weighted integrals.

All evaluations are conducted using deterministic decoding with temperature set to $0$. For each benchmark, we use a structured system prompt instructing the model to provide a single final answer along with a confidence score. For example, in MedQA's USMLE-style multiple-choice setting, the model is prompted to select a single option (A--D).

Per-example BAS utility is computed using Eq.~\ref{eq:bas_closed_form}, and results are averaged over the dataset. For AIME~2024/25, correctness is determined via exact numerical match against the ground-truth answer. For SimpleQA, correctness is evaluated using GPT-4o as an LLM judge \citep{gu2024survey}. The judge is provided both the model's answer and the reference answer, reducing the task to a constrained semantic verification problem. To assess potential bias, we conduct a human validation study on a random sample of 200 model--judge pairs and observe 100\% agreement between human annotations and the judge's outputs. The resulting correctness label $Z \in \{0,1\}$ is used directly in the BAS computation.

Full prompts and additional implementation details are provided in Appendix~\ref{sec:implementation_details}.

\subsection{Calibration and Evaluation Splits}
We utilize strictly disjoint sets for calibration and testing. For SimpleQA, we utilize a 1,000/1,000 split for calibration and evaluation, respectively. For MedQA, we utilize a calibration set of 1,273 examples and evaluate on a separate test set of 11,273 examples. Calibration parameters (i.e., the isotonic regression mapping $f$) are computed once on the calibration split and held fixed during testing.

\begin{figure}[t]
    \centering
    \includegraphics[width=\textwidth]{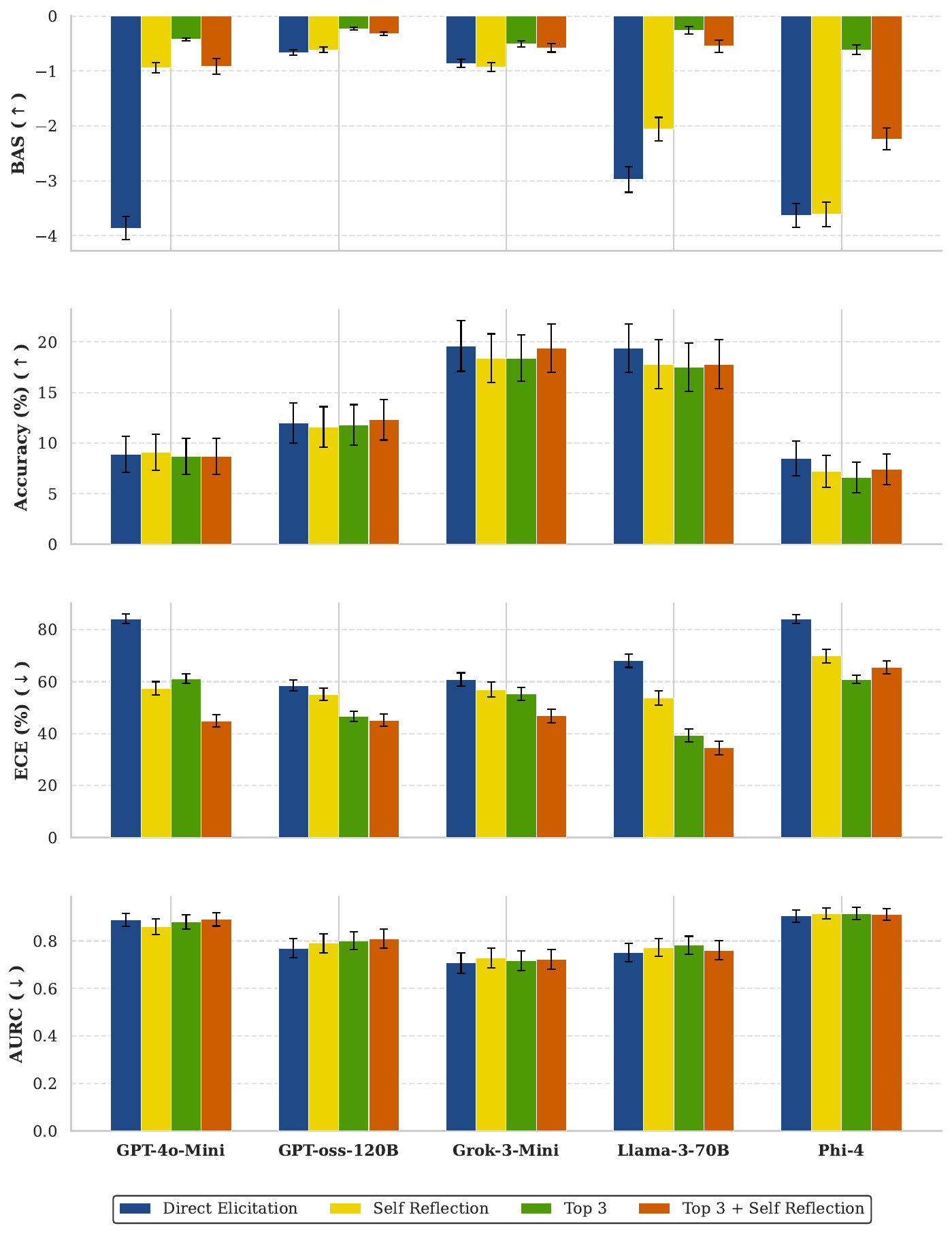}
\caption{Comparison of confidence elicitation methods on SimpleQA across BAS, accuracy, ECE, and AURC. Results shown for the same subset of representative models as in Table~\ref{tab:conf_elicitation_results}. Top-$k$ consistently improves BAS by reducing overconfident errors, while accuracy and AURC remain largely unchanged across methods. ECE improves when moving away from direct elicitation, indicating better calibration.}
 \label{fig:conf_elicitation_results}
\end{figure}

\section{Additional Results}
\begin{table}[h]
\centering\small
\begin{tabular}{l c c}
\toprule
Val set size & BAS & ECE (\%) $\downarrow$ \\
\midrule
1000 & $-0.000 \pm 0.006$ & $6.0 \pm 1.4$ \\
500  & $0.002 \pm 0.004$  & $6.0 \pm 1.4$ \\
250  & $0.000 \pm 0.004$  & $6.7 \pm 1.4$ \\
100  & $-0.005 \pm 0.004$ & $9.3 \pm 1.5$ \\
50   & $-0.000 \pm 0.003$ & $6.3 \pm 1.5$ \\
25   & $-0.016 \pm 0.005$ & $13.7 \pm 1.6$ \\
10   & $-0.072 \pm 0.010$ & $23.9 \pm 1.9$ \\
\bottomrule
\end{tabular}
\caption{Post-hoc calibration (isotonic regression) ablation results for GPT-oss-120B on SimpleQA. Validation set: 10 - 1,000 samples, test set: 1,000 samples.}
\label{tab:calibration_ablation}
\end{table}

Table~\ref{tab:calibration_ablation} presents an ablation study on the sample efficiency of isotonic regression for post-hoc calibration. Importantly, the results demonstrate that decision-level reliability is highly sample-efficient to recover: the model achieves near-optimal BAS and low ECE with as few as 50 to 100 calibration samples, with significant performance degradation occurring only in extremely low-data regimes (25 samples or fewer).

Figure~\ref{fig:conf_elicitation_results} reports the full results for Section~\ref{section:results_conf_elicitation}, comparing four confidence elicitation methods on SimpleQA in terms of BAS, accuracy, ECE, and AURC.

\section{Extended Methodology: Post-hoc Confidence Calibration}
\label{sec:appendix_calibration_details}

\begin{figure}[h]
    \centering
    \includegraphics[width=\textwidth]{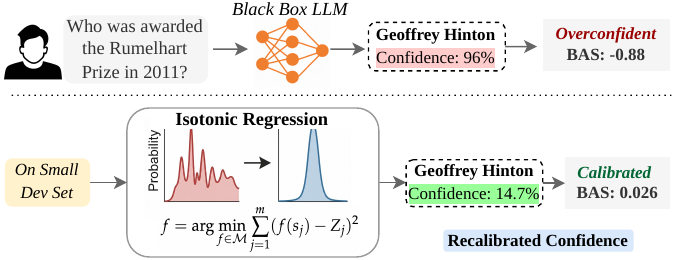}
\caption{
Effect of post-hoc calibration on model confidence. Raw model confidence values are transformed using isotonic regression to better align with empirical accuracy, leading to improved decision-theoretic evaluation under BAS.
}   \label{fig:calibration_diagram}
\end{figure}

To study how calibration affects decision-theoretic evaluation, we apply post-hoc calibration to the model's confidence and measure its impact on BAS and related metrics.

\paragraph{Calibration procedure.}
For each prompt $x$, the model produces a response $y$ and a scalar confidence $s \in [0,1)$. We transform this raw confidence into a calibrated estimate $\hat{p}$ using a mapping $f: s \mapsto \hat{p}$ learned on a held-out validation set $\mathcal{D}_{\text{val}} = \{(s_j, Z_j)\}_{j=1}^m$, where $Z_j \in \{0,1\}$ denotes correctness.

Following standard practice, we use \textit{isotonic regression}, a non-parametric and monotonic calibration method:
\begin{equation}
\hat{p} = f(s), \quad 
f = \arg\min_{f \in \mathcal{M}} \sum_{j=1}^m (f(s_j) - Z_j)^2,
\end{equation}
where $\mathcal{M}$ is the set of non-decreasing functions. This approach preserves ranking while correcting systematic miscalibration, particularly overconfidence.

\paragraph{Effect on BAS.}
We evaluate BAS using the calibrated confidence $\hat{p}$ in place of the raw value $s$. Using the closed-form expression from Eq.~\ref{eq:bas_closed_form}, the per-example utility becomes:
\begin{equation}
U(\hat{p}, Z) =
\begin{cases}
\hat{p}, & Z = 1, \\
\hat{p} + \ln(1 - \hat{p}), & Z = 0,
\end{cases}
\end{equation}
with $\hat{p}$ clipped for numerical stability. Because BAS penalizes overconfident errors through the logarithmic term, calibration directly affects the magnitude of negative utility assigned to incorrect predictions. This post-hoc calibration procedure is illustrated in Figure~\ref{fig:calibration_diagram}.

\section{Implementation Details}
\label{sec:implementation_details}

\begin{tcolorbox}[
    colback=gray!5,
    colframe=gray!75!black,
    title=BAS System Prompt (MedQA Example),
    fonttitle=\bfseries,
    boxrule=0.5pt,
    arc=2pt,
    left=6pt,
    right=6pt,
    top=6pt,
    bottom=6pt
]

\ttfamily
You are a medical expert assisting with USMLE-style questions. 
Analyze the clinical case and select the most appropriate option.

At the end of your response, provide your final decision in a 
block starting with '\#\#\# FINAL DECISION'.

Provide the single letter corresponding to the correct option 
and a confidence score.

Format:
\#\#\# FINAL DECISION
Answer: [Letter A-D]
Confidence: [0-1 representing the probability your answer is correct]
\end{tcolorbox}
\begin{tcolorbox}[
    colback=gray!5,
    colframe=gray!75!black,
    title=LLM as a Judge System Prompt (SimpleQA),
    fonttitle=\bfseries,
    boxrule=0.5pt,
    arc=2pt,
    left=6pt,
    right=6pt,
    top=6pt,
    bottom=6pt
]

\ttfamily
You are an automated grading system. Your task is to evaluate whether the 
Model Answer provided is factually consistent with the Ground Truth answer.

Question: \{question\}

Ground Truth: \{gt\}

Model Answer: \{model\_ans\}

Instructions:
1. Compare the core factual claim of the Model Answer to the Ground Truth.
2. If they are semantically identical or convey the same factual information, respond with 'CORRECT'.
3. If they contradict or if the Model Answer contains different factual information, respond with 'INCORRECT'.
Output only 'CORRECT' or 'INCORRECT'.

\end{tcolorbox}

\subsection{Prompts for Benchmarking Verbalized Confidence}
\label{sec:prompt_appendix}
\begin{tcolorbox}[
    title=System Prompt for Direct Elicitation,
    colback=gray!5,
    colframe=gray!75!black,
    fonttitle=\bfseries
]
\ttfamily
You are a short-form question answering system.

Answer the question using ONLY the final answer.
The answer must be as short as possible (a word, number, or short phrase).

At the very end, provide your final output in this exact format:
\begin{verbatim}
### FINAL DECISION
Answer: <short answer only>
Confidence: <number between 0 and 1>
\end{verbatim}
\end{tcolorbox}

\begin{tcolorbox}[
    title=System Prompt for Top-k,
    colback=gray!5,
    colframe=gray!75!black,
    fonttitle=\bfseries
]
\ttfamily
You are a short-form question answering system.

Provide your top $k$ most likely answers to the question. 
For each answer, provide a confidence score (probability) between 0 and 1. 
The sum of all confidence scores must equal 1.0.

The answers must be as short as possible (a word, number, or short phrase).

At the very end, provide your final output in this exact format:
\begin{verbatim}
### FINAL DECISION
1. Answer: <answer 1>, Confidence: <score 1>
2. Answer: <answer 2>, Confidence: <score 2>
...
\end{verbatim}
\end{tcolorbox}
\begin{tcolorbox}[
    title=System Prompt for Self-Reflection,
    colback=gray!5,
    colframe=gray!75!black,
    fonttitle=\bfseries
]
\ttfamily
\textbf{Step 1: Generation}

You are a short-form question answering system.
Answer the question using ONLY the final answer.
The answer must be as short as possible (a word, number, or short phrase).

\medskip
\textbf{Step 2: Reflection}

You are an expert evaluator. Given a question and a proposed answer, 
assess the probability that the answer is factually correct.

Provide your confidence as a single number between 0 and 1.

Format your output exactly as:
\begin{verbatim}
Confidence: <number>
\end{verbatim}
\end{tcolorbox}
\begin{tcolorbox}[
    title=System Prompt for Top-k + Self-Reflection,
    colback=gray!5,
    colframe=gray!75!black,
    fonttitle=\bfseries
]
\ttfamily
\textbf{Step 1: Candidate Generation}

You are a short-form question answering system.
Provide exactly $k$ distinct candidate answers to the question, 
ordered from most likely to least likely.
Answers must be as short as possible (a word, number, or short phrase).

\medskip
\textbf{Step 2: Probability Distribution Reflection}

You are an expert evaluator. Below is a question and $k$ candidate answers.
Assign a probability (between 0 and 1) to each answer based on how likely it is to be correct.
The sum of all probabilities must equal 1.0.

Format your final output exactly as follows:
\begin{verbatim}
### FINAL DECISION
1. Answer: <answer 1>, Confidence: <score 1>
2. Answer: <answer 2>, Confidence: <score 2>
...
\end{verbatim}
\end{tcolorbox}

\section{Example Code Snippet}
\lstdefinestyle{basstyle}{
    backgroundcolor=\color{backcolour},   
    commentstyle=\color{codegreen},
    keywordstyle=\color{magenta},
    numberstyle=\tiny\color{codegray},
    stringstyle=\color{codepurple},
    basicstyle=\ttfamily\small,
    breakatwhitespace=false,         
    breaklines=true,                 
    captionpos=b,                    
    keepspaces=true,                 
    numbers=left,                    
    numbersep=5pt,                  
    showspaces=false,                
    showstringspaces=false,
    showtabs=false,                  
    tabsize=2
}

\begin{lstlisting}[language=Python, style=basstyle, caption={Example usage of the \texttt{bas-eval} package for measuring behavioral alignment. Plug-and-play metric.}]
import pandas as pd
from bas_eval import bas_score, BASReport

# 1. Load your model results (e.g., from MedQA or SimpleQA)
# Dataset should contain 'is_correct' (bool) and 'confidence' (float [0,1])
df = pd.read_csv("model_results.csv")

# 2. Simple API: Compute standard BAS (Uniform risk prior)
# Reflects the expected utility aggregated over all risk thresholds
score = bas_score(df['is_correct'], df['confidence'])
print(f"Mean BAS: {score:.4f}")

# 3. Advanced API: Generate a full Alignment Report
# Automatically computes Uniform, Linear, and Quadratic risk profiles
report = BASReport(df['is_correct'], df['confidence'])

# Display a summary table for the paper
report.print_summary()

# 4. Access safety-critical metrics specifically
# Penalizes high-confidence hallucinations more severely
safety_score = report.weighted_score(prior='quadratic')
print(f"Safety-Critical BAS (w(t)=3t^2): {safety_score:.4f}")
\end{lstlisting}

\end{document}